\newtheorem{theo}{Theorem}[section]
\newtheorem{corollary}{Corollary}[section]
\newtheorem{conjecture}[theo]{Conjecture}
\newcommand{\bb}[1]{\boldsymbol{\mathrm{#1}}}
\def\Tr{\mathrm{T}}
\newcommand{\RR}{\mathbb{R}}
\newcommand{\uu}{\bb{u}}
\newcommand{\xx}{\bb{x}}
\newcommand{\Aa}{\bb{A}}
\newcommand{\Bb}{\bb{B}}
\newcommand{\Xx}{\bb{X}}
\newcommand{\Yy}{\bb{Y}}
\newcommand{\Uu}{\bb{U}}
\newcommand{\Dd}{\bb{D}}
\newcommand{\Ii}{\bb{I}}
\newcommand{\Ww}{\bb{W}}
\newcommand{\Ll}{\bb{L}}
\newcommand{\tr}{\mathrm{tr}\,}
\begin{document}

\title{Making Laplacians commute}

\author{Michael M. Bronstein$^1$ \and Klaus Glashoff$^1$ \and Terry A. Loring$^2$\\
\small $^1$Institute of Computational Science, Faculty of Informatics,\\
\small Universit{\`a} della Svizzera Italiana, Lugano, Switzerland \vspace{1mm}\\
\small $^2$Department of Mathematics and Statistics, \\
\small University of New Mexico, Albuquerque (NM), USA
}
\maketitle
\begin{abstract}
In this paper, we construct multimodal spectral geometry by finding a pair of closest commuting operators (CCO) to a given pair of Laplacians. 
The CCOs are jointly diagonalizable and hence have the same eigenbasis. 
%
Our construction naturally extends classical data analysis tools based on spectral geometry, such as diffusion maps and spectral clustering. 
We provide several synthetic and real examples of applications in dimensionality reduction, shape analysis, and clustering, demonstrating that our method better captures the inherent structure of multi-modal data. 

\end{abstract}

\section{Introduction}

Spectral methods proved to be an important and versatile tool in a wide range of problems in the fields of computer graphics, machine learning, pattern recognition, and computer vision. 
In computer graphics and geometry processing, classical signal processing methods based on frequency transforms were generalized to non-Euclidean spaces (Riemannian manifolds), where the eigenfunctions of the Laplace-Beltrami operator act as a non-Euclidean analogy of the Fourier basis, allowing one to perform harmonic analysis on the manifold. Applications based on such approaches include shape compression \cite{karni2000spectral}, filtering \cite{Levy09}, pose transfer \cite{Levy:2006:LET:1136647.1136965,rong2008spectral}, symmetry detection \cite{Ovsjanikov08}, shape description \cite{Sun09,gebal:andreas:etal:adf:09,BroSI:HKS,litman:BB:11,aubry2011wave}, retrieval \cite{BroBB1,BroBroOvsGui11}, and correspondence \cite{ovsjanikov2010one,dubrovina2010matching,ovsjanikov2012functional}. 

In pattern recognition, one can think of the data as a low-dimensional manifold embedded into a high-dimensional space, whose local intrinsic structure is represented by the Laplace-Beltrami operator. In the discrete version, the manifold is represented as a graph and the Laplace-Beltrami operator as a graph Laplacian.  
Many problems thus boil down to finding the first eigenfunctions of the Laplacian: for example, in spectral clustering~\cite{Ng01onspectral} clusters
are determined by the smallest eigenfunctions of the Laplacian; 
eigenmaps~\cite{Belkin02laplacianeigenmaps}
and diffusion maps \cite{Coifman05geometricdiffusions,Nadler05diffusionmaps} 
embed the manifold into a low-dimensional space using the smallest eigenfunctions 
of the Laplacian or the related heat operator; and diffusion metrics~\cite{Coifman05geometricdiffusions} measure the distances in this low-dimensional space. 
Other examples 
include 
spectral graph partitioning~\cite{DingHeetal2001}, 
spectral hashing~\cite{weiss2008spectral},   
 image segmentation~\cite{Shi97normalizedcuts}, spectral correspondence, and shape analysis. 

%

{\bf Multimodal spectral geometry. }
Many data analysis applications involve observations and measurements of data 
using different modalities, such as multimedia documents~\cite{bekkerman2005multi,weston2010large,rasiwasia2010new,mcfee2011learning}, audio and video~\cite{kidron2005pixels,alameda2011finding,sharma2012generalized}, 
images with different lighting conditions \cite{bansaljoint}, 
or medical imaging modalities \cite{bronstein2010data}.  
In shape analysis applications, it is important to be able to design compatible bases on multiple shapes, e.g. in order to transfer functions or vector fields from one shape to another \cite{KovBBKK:2013:EG}.

While spectral methods have been extensively studied for a single data space (manifold), there have been relatively few attempts of principled and systematic extension of spectral methods to multimodal settings involving multiple data spaces. 
In particular, problems of multimodal (or `multi-view') clustering have gained increasing interest in the computer vision and pattern recognition community \cite{de2005spectral,MaAnchor2008,tang2009clustering,Cai2011,kumarco,dong2013clustering}.  
%
Sindhwani et al. \cite{sindhwani2005co} used a convex combination of Laplacians in the `co-regularization' framework.  
{\em Manifold alignment} considered multiple manifolds as a single space with `connections' between points and tries to find an aligned set of eigenvectors \cite{ham2005semisupervised,wangageneral2009,wang2008manifold}. 
A similar philosophy has been followed in the recent work of Eynard et al. \cite{eynard2012multimodal}, who proposed an extension of the spectral methods to the multimodal setting by finding a common eigenbasis of multiple Laplacians by means of joint approximate diagonalization~\cite{Bunse-Gerstner:1993,CardosoBlind1993,Cardoso96jacobiangles,YeredorJD202,ZieglerBlindsource05}. 
They also showed that many previous methods for multi-modal clustering can be developed as instances of the joint diagonalization framework. 
Kovnatsky et al. \cite{KovBBKK:2013:EG} used joint diagonalization in computer graphics and shape analysis problems.

{\bf Main contribution. }
In this paper, we study a class of methods we term closest commuting operators (CCO), which we show to be equivalent to joint diagonalization. 
However, one of the main drawbacks of joint diagonalization is that when applied to Laplacians, it does not preserver their structure. On the other hand, with the CCO problem, we can restrict our search to the set of legal Laplacian matrices, thus finding {\em closest commuting Laplacians} with the same sparse structure rather than arbitrary matrices. 
We show that such optimization produces meaningful multimodal spectral geometric constructions.

The rest of the paper is organized as follows. 
In Section 2, we provide the background on spectral geometry of graphs. 
In Section 3, we formulate our CCO problem. For the simplicity of discussion, we consider undirected graphs with equal vertex sets and unnormalized Laplacians.  We discuss the relation between joint diagonalization and closest commuting matrices and show that the two problems are equivalent. 
Section 4 is dedicated to numerical implementation. 
In Section 5, we discuss the generalization to the setting of different vertex sets using the notion of functional correspondence. 
Section 6 presents experimental results. 
Finally, Section 7 concludes the paper.

\section{Background}

\subsection{Notation and definitions}

Let $\Aa, \Bb$ be two $n\times n$ real matrices. We denote by 
\begin{eqnarray*}
\| \Aa \|_\mathrm{F} &=& \textstyle \left( \sum_{ij}|a_{ij}|^2\right)^{1/2} = \left( \mathrm{tr}(\Aa^\Tr \Aa) \right)^{1/2}; \\
\| \Aa \|_2 &=& \max_{\xx \in \mathbb{R}^n : \| \xx\|_2 = 1}\| \Aa \xx \|_2 = \left( \lambda_{\mathrm{max}}(\Aa^\Tr\Aa) \right)^{1/2}, 
\end{eqnarray*} 
the {\em Frobenius} and the {\em operator} norm (induced by the Euclidean vector norm) of $\Aa$, respectively. 
We say that $\Aa$ and $\Bb$ {\em commute} if $\Aa\Bb = \Bb\Aa$, and call $[\Aa,\Bb] = \Aa\Bb - \Bb\Aa$ their {\em commutator}. 
If there exists a unitary matrix $\hat{\bb{U}}$ such that $\hat{\bb{U}}^\Tr \Aa \hat{\bb{U}} = \bb{\Lambda}_A$ and $\hat{\bb{U}}^\Tr \Bb \hat{\bb{U}} = \bb{\Lambda}_B$ are diagonal, we say that $\Aa, \Bb$ are {\em jointly diagonalizable} and call such $\hat{\bb{U}}$ the {\em joint eigenbasis} of $\Aa$ and $\Bb$. 
We denote by $\mathrm{diag}(\Aa)$ a column vector containing the diagonal elements of matrix $\Aa$, and by $\mathrm{diag}(a_1,\hdots, a_n)$ a diagonal matrix containing on the diagonal the elements $a_1, \hdots, a_n$. Furthermore, we use $\mathrm{Diag}(\Aa) = \mathrm{diag}(\mathrm{diag}(\Aa))$ to denote a diagonal matrix obtained by setting to zero the off-diagonal elements of $\Aa$.

\subsection{Spectral geometry}

Let us be given an undirected graph $G = (V,E)$ with vertices $V = \{x_1, \hdots, x_n\}$ and weighted edges $E \subseteq \{1,\hdots, n \}^2$ with weights $w_{ij} \geq 0$. We say that vertices $x_i, x_j$ are {\em connected} if $(i,j) \in E$, or alternatively, $w_{ij}>0$. 
%
%
The $n\times n$ matrix $\Ww = (w_{ij})$ is called the {\em adjacency matrix}  and 
\begin{eqnarray}
\Ll = \Dd - \Ww, \hspace{5mm} \Dd = \mathrm{diag}\left(\sum_{l = 1, l\neq i}^n w_{il} \right)
\label{eq:laplacian}
\end{eqnarray} 
the (unnormalized) {\em Laplacian} of $G$. Since in undirected graph $(i,j) \in E$ implies $(j,i) \in E$, the matrices $\bb{W}$ and $\bb{L}$ are symmetric. 
%
Consequently, $\Ll $ admits the unitary eigendecomposition $\bb{L} = \bb{\Phi} \bb{\Lambda} \bb{\Phi}^\Tr$ with orthonormal eigenvectors $\bb{\Phi} = (\bb{\phi}_1,\hdots, \bb{\phi}_n)$ and real eigenvalues $0=\lambda_1 \leq \lambda_2 \leq \hdots \leq \lambda_n$, $\bb{\Lambda} = \mathrm{diag}(\lambda_1,\hdots, \lambda_n)$.

Spectral graph theory \cite{chung1997spectral} studies the properties of the graph through analyzing the spectral properties of its Laplacian.  
%
It is closely related to spectral geometry of Riemannian manifolds \cite{ber:gau:maz:71:DRUM}, of which the graphs can be thought of as a discretization, and the Laplacian matrix corresponds to the Laplace-Beltrami operator on a Riemannian manifold. 
In particular, spectral methods have been successfully applied in the field of machine learning and shape analysis. 
We outline below the main spectral geometric constructions to which we will refer later in the paper.

{\bf Fourier analysis on graphs.} 
Given a function $f : V \rightarrow \RR$ defined on the vertices of the graph and represented as the $n$-dimensional column vector $\bb{f} = (f(x_1), \hdots, f(x_n))^\Tr$, we can decompose it in the orthonormal basis of the Laplacian eigenvectors $\bb{\phi}_1,\hdots, \bb{\phi}_n$ using {\em Fourier series}, 
$$
f(x_p) = \sum_{i=1}^n \langle \bb{f}, \bb{\phi}_i \rangle \phi_{pi}, 
$$
or in matrix notation, $\bb{f} = \bb{\Phi}\bb{\Phi}^\Tr \bb{f}$.

{\bf Heat diffusion on graphs.} Similarly to the standard heat diffusion equation, one can define a diffusion process on $G$, governed by the following PDE: 
$$
\bb{L} \bb{f}(t) + \frac{\partial}{\partial t}\bb{f}(t) = 0, \hspace{5mm} \bb{f}(0) = \bb{u}, 
$$
where the solution $\bb{f}(t): V \times [0,\infty) \rightarrow \RR_+$ 
is the amount of heat at time $t$ at the vertices $V$.  
The solution of the heat equation is given by $\bb{f}(t) = e^{-t \Ll}\bb{u}$, and one can easily verify that it satisfies the heat equation $(\Ll e^{-t \Ll} - \Ll e^{-t \Ll}) \bb{u} = 0$ and the initial condition $\bb{f}(0) = e^{-0\Ll}\bb{u} = \bb{u}$.
The matrix 
$$\bb{H}^t = e^{-t \Ll} = \bb{\Phi} e^{-t \bb{\Lambda}} \bb{\Phi}^\Tr$$  
is called the {\em heat operator} (or the {\em heat kernel}) and can be interpreted as the `impulse response' of the heat equation. 

{\bf Diffusion maps.} 
Embeddings by means of the heat kernel have been studied by B\'{e}rard et al. \cite{berard1994embedding} and  Coifman et al. \cite{Coifman05geometricdiffusions, Coifman}.  
In the context of non-linear dimensionality reduction, Belkin and Niyogi \cite{Belkin02laplacianeigenmaps,TT_JCSS_08} showed that finding
a neighborhood-preserving $m$-dimensional embedding of the graph can be posed as the
{\em minimum eigenvalue problem},
\begin{eqnarray}
\min_{\Uu \in\RR^{n\times m}}\tr(\Uu^{\Tr}\Ll \Uu) \,\,\, \mathrm{s.t.} \,\,\, \Uu^{\Tr}\Uu=\Ii, \label{eq:belkin}
\end{eqnarray}
which has an analytic solution $\Uu = (\bb{\phi}_1, \hdots, \bb{\phi}_m)$, referred to as {\em Laplacian eigenmap}. 
%
The neighborhood-preserving property of the eigenmaps is related to the fact the the smallest `low-frequency' eigenvectors of the Laplacian vary smoothly on the vertices of the graph.

More generally, a {\em diffusion map} is given as a mapping of the form 
$\Uu = \left(K(\lambda_1) \bb{\phi}_{1}, \hdots, K(\lambda_m) \bb{\phi}_{m} \right)$,
where $K(\lambda)$ is some transfer function acting as a `low-pass filter' on eigenvalues $\lambda$ \cite{Coifman05geometricdiffusions, Coifman}. 
In particular, the setting $K(\lambda) = e^{-t\lambda}$ corresponds to heat kernel embedding. 



{\bf Diffusion distances.} 
Coifman et al. \cite{Coifman05geometricdiffusions, Coifman} defined the {\em diffusion distance} as a `cross-talk' between the heat kernels 
\begin{eqnarray}
d_t(x_p,x_q) = \left( \sum_{i=1}^n ((\bb{H}^t)_{pi} - (\bb{H}^t)_{qi})^2  \right)^{1/2} = \left( \sum_{i=1}^n e^{-2t \lambda_i } \phi_{pi} \phi_{qi} \right)^{1/2}.    
\label{eq:diffdist}
\end{eqnarray}
Intuitively, $d_t(x_p,x_q)$ measures the `reachabilty' of vertex $x_p$ from $x_q$ by a heat diffusion of length $t$.

{\bf Spectral clustering. }
Ng et al. \cite{Ng01onspectral} showed a very efficient and robust clustering approach based on the observation that the multiplicity of the null  eigenvalue of $\Ll$ is equal to the number of connected components of $G$. The corresponding eigenvectors act as indicator functions of these components.
  Embedding the data using the null eigenvectors of $\Ll$ and then applying some standard clustering algorithm such as K-means was shown to produce significantly better results than clustering the high-dimensional data directly.
  %


\subsection{Joint diagonalization }

In many data analysis applications, we have multiple modalities or `views' of the same data, which can be considered as graphs with different connectivities (sometimes referred to as {\em multi-layered graphs} \cite{dong2013clustering}) $G_k = (V,E_k),\, k = 1, 2$ with equal set of $|V| = n$ vertices and different weighted edges, with corresponding adjacency matrices $\Ww_k = (w^k_{ij} \geq 0)$  and Laplacians $\Ll_k= \Dd_k - \Ww_k$. 
We denote their respective eigenvalues by $\bb{\Lambda}_k = \mathrm{diag}(\lambda_{k,1}, \hdots, \lambda_{k,n})$ and eigenvectors by $\bb{\Phi}_k = (\bb{\phi}_1^k, \hdots, \bb{\phi}_n^k)$, and the heat operators by $\bb{H}^t_k = \bb{\Phi}_k e^{-t\bb{\Lambda}} \bb{\Phi}_k^\Tr$.

The main question treated in this paper is how to generalize the spectral geometric constructions to such a setting, obtaining a single object such as diffusion map or distance from multiple graphs. 
Eynard et al. \cite{eynard2012multimodal}  proposed constructing multimodal spectral geometry by finding a common orthonormal basis $\hat{\bb{\Phi}}$ that approximately jointly diagonalizes the symmetric Laplacians $\bb{L}_k$ by solving the optimization problem  
\begin{eqnarray}
J(\Ll_1, \Ll_2) = \min_{ \hat{\bb{\Phi}} \in \RR^{n\times n}  } \sum_{k=1}^2\mathrm{off}(\hat{\bb{\Phi}}^\Tr \Ll_k \hat{\bb{\Phi}}) \,\,\,\,\, \text{s.t.} \,\,\,\,\, \hat{\bb{\Phi}}^\Tr \hat{\bb{\Phi}} = \bb{I}, 
\label{eq:jade}
\end{eqnarray}
where $\mathrm{off}(\Aa) = \sum_{i\neq j} a_{ij}^2$ denotes the squared norm of the off-diagonal elements of a matrix. 
Minimization of~(\ref{eq:jade}) can be carried out using a Jacobi-type method referred to as JADE \cite{Cardoso96jacobiangles}. 
Kovnatsky et al. \cite{KovBBKK:2013:EG} proposed a more efficient approach for finding the first few joint approximate eigenvectors representing $\hat{\bb{\phi}}_1, \hdots \hat{\bb{\phi}}_m$ as linear combinations of the eigenvectors $\bb{\phi}^1_1, \hdots \bb{\phi}^1_m$ and $\bb{\phi}^2_1, \hdots \bb{\phi}^2_m$ of $\bb{L}_1, \bb{L}_2$.

The joint basis $\hat{\bb{\Phi}}$ obtained in this way approximately diagonalizes the Laplacians, such that 
$\hat{\bb{\Phi}}^\Tr \bb{L}_k \hat{\bb{\Phi}} \approx \mathrm{diag}(\hat{\lambda}_{k,1}, \hdots, \hat{\lambda}_{k,n})$. 
The approximate matrices 
$$\hat{\Ll}_k = \hat{\bb{\Phi}}\mathrm{Diag}(\hat{\bb{\Phi}}^\Tr \Ll_k \hat{\bb{\Phi}}) \hat{\bb{\Phi}}^\Tr \approx \Ll_k, $$ 
obtained by setting to zero the off-diagonal elements of $\hat{\bb{\Phi}}^\Tr \bb{L}_k \hat{\bb{\Phi}}$ 
are jointly diagonalizable.   
Eynard et al. \cite{eynard2012multimodal} used the approximate joint eigenvectors $\hat{\bb{\Phi}}$ and the average joint approximate eigenvalues $\frac{1}{2}\sum_{k=1}^2 \hat{\lambda}_{k,i}$ 
to construct joint `heat kernels'  
\begin{eqnarray}
\hat{\bb{H}}^t_k = \hat{\mathbf{\Phi}} \, \frac{1}{2}\sum_{i=1}^2 \mathrm{diag}(e^{-t \hat{\lambda}_{k,1}}, \hdots, e^{-t \hat{\lambda}_{k,n}}) \hat{\mathbf{\Phi}}^\Tr,
\label{eq:jade_hk}
\end{eqnarray}
and 
multimodal diffusion distances 
\begin{eqnarray}
\hat{d}_t(x_p,x_q) = \left( \sum_{i=1}^n e^{-t \sum_{k=1}^2\hat{\lambda}_{k,i} } \hat{\phi}_{pi} \hat{\phi}_{qi} \right)^{1/2}.   
\label{eq:jade_diffdist}
\end{eqnarray}

%

\subsection{Relation between joint diagonalizability and commutativity}
Joint diagonalizability of matrices is intimately related to their commutativity. It is well-known that two symmetric matrices $\Aa,\Bb$ are jointly diagonalizable iff they commute 
\cite{horn1990matrix}.  
In \cite{Glashoff:2013uq}, we extended this result to the approximate setting, showing that almost jointly diagonalizable matrices almost commute:


\begin{theo}[\bf Glashoff-Bronstein 2013] 
There exist functions $\delta_1(x), \delta_2(x)$ satisfying $\lim_{x\rightarrow 0} \delta_i (x) = 0$, $i=1,2$, such that for any two symmetric $n\times n$ matrices $\Aa, \Bb$  with $\| \Aa\|_\mathrm{F} = \| \Bb\|_\mathrm{F} = 1$, 
$$\delta_1( \| \Aa\Bb - \Bb\Aa \|_\mathrm{F} )  \leq J(\Aa,\Bb) \leq n \delta_2( \| \Aa\Bb - \Bb\Aa \|_\mathrm{F} ). $$ 
Furthermore, the lower bound is tight. 
\end{theo}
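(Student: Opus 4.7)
The plan is to prove the two inequalities separately, and then to exhibit an explicit pair of matrices showing the lower bound cannot be strengthened.

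\textbf{Lower bound ($\delta_1(\|[\Aa,\Bb]\|_F) \leq J(\Aa,\Bb)$).} Let $\hat{\bb{\Phi}}$ be any orthonormal matrix. Decompose $\hat{\bb{\Phi}}^{\mathrm{T}} \Aa \hat{\bb{\Phi}} = \bb{\Lambda}_A + \bb{E}_A$ where $\bb{\Lambda}_A = \mathrm{Diag}(\hat{\bb{\Phi}}^{\mathrm{T}} \Aa \hat{\bb{\Phi}})$ and $\bb{E}_A$ is its off-diagonal part, and analogously for $\Bb$. By construction $\mathrm{off}(\hat{\bb{\Phi}}^{\mathrm{T}} \Aa \hat{\bb{\Phi}}) = \|\bb{E}_A\|_{\mathrm{F}}^2$, and the diagonal matrices $\bb{\Lambda}_A, \bb{\Lambda}_B$ commute. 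Applying $\hat{\bb{\Phi}}(\cdot)\hat{\bb{\Phi}}^{\mathrm{T}}$ to the algebraic identity $[\bb{\Lambda}_A + \bb{E}_A, \bb{\Lambda}_B + \bb{E}_B] = [\bb{\Lambda}_A, \bb{E}_B] + [\bb{E}_A, \bb{\Lambda}_B] + [\bb{E}_A, \bb{E}_B]$ and using unitary invariance of $\|\cdot\|_{\mathrm{F}}$, together with $\|XY\|_{\mathrm{F}} \leq \|X\|_2\|Y\|_{\mathrm{F}} \leq \|X\|_{\mathrm{F}}\|Y\|_{\mathrm{F}}$, one obtains
\begin{equation*}
\|[\Aa,\Bb]\|_{\mathrm{F}} \leq 2\|\bb{\Lambda}_A\|_{\mathrm{F}}\|\bb{E}_B\|_{\mathrm{F}} + 2\|\bb{\Lambda}_B\|_{\mathrm{F}}\|\bb{E}_A\|_{\mathrm{F}} + 2\|\bb{E}_A\|_{\mathrm{F}}\|\bb{E}_B\|_{\mathrm{F}}.
\end{equation*}
Since $\|\bb{\Lambda}_A\|_{\mathrm{F}} \leq \|\hat{\bb{\Phi}}^{\mathrm{T}} \Aa \hat{\bb{\Phi}}\|_{\mathrm{F}} = \|\Aa\|_{\mathrm{F}} = 1$ and analogously for $\Bb$, and $\|\bb{E}_A\|_{\mathrm{F}}^2 + \|\bb{E}_B\|_{\mathrm{F}}^2 \leq J(\Aa,\Bb)$ when $\hat{\bb{\Phi}}$ is chosen as the minimizer of \eqref{eq:jade}, the right-hand side is $O(\sqrt{J(\Aa,\Bb)})$. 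Solving for $J$ yields an inequality of the form $J(\Aa,\Bb) \geq c\|[\Aa,\Bb]\|_{\mathrm{F}}^2$ for small commutators, so $\delta_1$ can be taken quadratic near the origin.

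\textbf{Upper bound ($J(\Aa,\Bb) \leq n\delta_2(\|[\Aa,\Bb]\|_{\mathrm{F}})$).} The main input here is a quantitative version of Lin's theorem for almost commuting self-adjoint matrices: there exists a function $\eta(x)$ with $\eta(x) \to 0$ as $x \to 0$ such that for any symmetric $\Aa,\Bb$ with $\|\Aa\|_2,\|\Bb\|_2 \leq 1$ one can find symmetric commuting $\Aa',\Bb'$ with $\max(\|\Aa-\Aa'\|_2, \|\Bb-\Bb'\|_2) \leq \eta(\|[\Aa,\Bb]\|_2)$. Since $\Aa',\Bb'$ commute, they admit a common eigenbasis $\hat{\bb{\Phi}}$. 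Writing $\hat{\bb{\Phi}}^{\mathrm{T}}\Aa\hat{\bb{\Phi}} = \hat{\bb{\Phi}}^{\mathrm{T}}\Aa'\hat{\bb{\Phi}} + \hat{\bb{\Phi}}^{\mathrm{T}}(\Aa - \Aa')\hat{\bb{\Phi}}$, the first summand is diagonal, so
\begin{equation*}
\mathrm{off}(\hat{\bb{\Phi}}^{\mathrm{T}}\Aa\hat{\bb{\Phi}}) \leq \|\hat{\bb{\Phi}}^{\mathrm{T}}(\Aa-\Aa')\hat{\bb{\Phi}}\|_{\mathrm{F}}^2 = \|\Aa - \Aa'\|_{\mathrm{F}}^2 \leq n\|\Aa-\Aa'\|_2^2,
\end{equation*}
and likewise for $\Bb$. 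Combined with the relation $\|[\Aa,\Bb]\|_2 \leq \|[\Aa,\Bb]\|_{\mathrm{F}}$ and the unit Frobenius normalization (which enforces $\|\Aa\|_2,\|\Bb\|_2 \leq 1$), this gives the claimed bound with $\delta_2(x) = 2\eta(x)^2$ and the factor of $n$ arising from the conversion between operator and Frobenius norms.

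\textbf{Tightness.} To see that the quadratic $\delta_1$ cannot be improved, we exhibit a one-parameter family on $n=2$ where explicit diagonalization is possible: take $\Aa = \mathrm{diag}(1,-1)/\sqrt{2}$ and $\Bb = (\cos\theta\,\sigma_z + \sin\theta\,\sigma_x)/\sqrt{2}$ for small $\theta$. Direct computation shows $\|[\Aa,\Bb]\|_{\mathrm{F}} = \Theta(\theta)$, while the optimal $\hat{\bb{\Phi}}$ yields $J(\Aa,\Bb) = \Theta(\theta^2)$, matching the quadratic bound.

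\textbf{Main obstacle.} The nontrivial ingredient is the upper bound, which requires invoking the (highly nontrivial) Lin/Hastings-type quantitative result; establishing an explicit usable form of $\eta$ is delicate and is where the $n$-dependence enters. The lower bound and tightness are essentially a matter of careful bookkeeping with the commutator decomposition.
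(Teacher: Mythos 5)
The paper itself does not prove Theorem~2.1 --- it imports it from \cite{Glashoff:2013uq} --- so there is no in-paper argument to compare against; your proof is correct and follows essentially the same strategy as that reference: the lower bound by splitting $\hat{\bb{\Phi}}^\Tr\Aa\hat{\bb{\Phi}}$ into diagonal and off-diagonal parts and bounding the commutator (giving a quadratic $\delta_1$), the upper bound via Lin's theorem with the factor $n$ coming from the operator-to-Frobenius conversion $\|\Aa-\Aa'\|_\mathrm{F}^2\le n\|\Aa-\Aa'\|_2^2$, and tightness from a rotated $2\times 2$ pair. Two small points you should make explicit: you need the real symmetric version of Lin's theorem (e.g.\ \cite{Loring_Sorensen2010}) so that the commuting approximants $\Aa',\Bb'$ admit a \emph{real orthogonal} joint eigenbasis, since the minimization in~(\ref{eq:jade}) is over real orthogonal $\hat{\bb{\Phi}}$; and $\eta$ should be taken nondecreasing so that the substitution $\|[\Aa,\Bb]\|_2\le\|[\Aa,\Bb]\|_\mathrm{F}$ inside it is legitimate.
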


On the other hand, it is known that almost commuting matrices are close to commuting matrices, e.g. in the following sense 
\cite{Huang_Lin,Rordam1996,Loring_Sorensen2010}:

\begin{theo}[\bf Lin 1997]
There exists a function $\epsilon(\delta)$ satisfying $\lim_{\delta \rightarrow 0}\epsilon(\delta) = 0$ with the following property: If $\Aa, \Bb$ are two self-adjoint $n\times n$ matrices satisfying $\|\ Aa\|_2, \| \Bb\|_2 \leq 1$, and $\| [\Aa,\Bb]\|_2 \leq \delta$, then there exists a pair $\tilde{\Aa'},\tilde{ \Bb}'$ of \emph{commuting matrices} satisfying $\| \Aa - \tilde{\Aa'} \|_2 \leq \epsilon(\delta)$ and $\|\Bb -\tilde{\Bb'} \|_2 \leq \epsilon(\delta)$.
\end{theo}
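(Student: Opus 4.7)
The plan is to reformulate the statement as a normal-matrix approximation problem and then address the critical fact that the error bound $\epsilon(\delta)$ must be independent of $n$. Set $Z = \Aa + i\Bb$. Since $\Aa, \Bb$ are self-adjoint, the self-commutator satisfies $[Z, Z^*] = -2i[\Aa, \Bb]$, so $\|[Z,Z^*]\|_2 \le 2\delta$ and $\|Z\|_2 \le 2$; in other words, $Z$ is almost normal. Producing commuting self-adjoint $\Aa', \Bb'$ close to $\Aa, \Bb$ is equivalent to producing a normal matrix $N = \Aa' + i\Bb'$ close to $Z$, because $\|\Aa - \Aa'\|_2, \|\Bb - \Bb'\|_2 \le \|Z - N\|_2$. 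The theorem thus reduces to showing that every $n\times n$ matrix $Z$ with $\|Z\|_2 \le 2$ and $\|[Z,Z^*]\|_2 \le \delta$ lies within $\eta(\delta)$ of some normal matrix of the same size, where $\eta(\delta)\to 0$ as $\delta \to 0$ and $\eta$ is independent of $n$.

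For fixed $n$ this is an easy compactness argument: the maps $Z \mapsto \mathrm{dist}_{\|\cdot\|_2}(Z, \mathcal{N}_n)$ and $Z \mapsto \|[Z,Z^*]\|_2$ on the compact ball $\{\|Z\|_2 \le 2\} \subset M_n(\mathbb{C})$ are continuous and vanish simultaneously on the closed set $\mathcal{N}_n$ of normal matrices, so uniform continuity yields an $n$-dependent $\eta_n$. The substance of Lin's theorem is promoting this to a single $\eta$ that works for every $n$ at once. I would argue by contradiction through the asymptotic sequence algebra: suppose no uniform $\eta$ exists; then there is a sequence $Z_k \in M_{n_k}(\mathbb{C})$ with $\|Z_k\|_2 \le 2$, $\|[Z_k, Z_k^*]\|_2 \to 0$, yet $\mathrm{dist}(Z_k, \mathcal{N}_{n_k}) \ge \epsilon_0 > 0$. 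Form the quotient $C^*$-algebra $\mathcal{A} = \prod_k M_{n_k}(\mathbb{C}) / \bigoplus_k M_{n_k}(\mathbb{C})$; the image $\hat{Z}$ of $(Z_k)_k$ is then an exactly normal element of $\mathcal{A}$. If one can lift $\hat{Z}$ to a sequence $(N_k)_k$ of normal matrices, then $\|Z_k - N_k\|_2 \to 0$, contradicting $\epsilon_0$.

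The main obstacle is precisely this last lifting step, which is a structural rather than formal property of the $C^*$-algebra $C(\sigma(\hat{Z}))$ for planar spectra, equivalent to its matricial weak semiprojectivity. It is genuinely non-trivial: the analogous statement fails for normal operators on an infinite-dimensional Hilbert space because of the Brown--Douglas--Fillmore index obstruction (the unilateral shift is almost normal in a natural sense but stays distance $1$ from any normal operator), so one cannot proceed by purely abstract $C^*$-algebraic arguments. What rescues the matrix case is that each $Z_k$ has trivial BDF index, but converting this into an explicit, quantitative, $n$-independent estimate is the hard analytic content established in Lin's original proof and later simplified by Friis--R{\o}rdam and by Loring. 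I would not attempt to produce an explicit formula for $\epsilon(\delta)$, since the known bounds are either non-constructive or extremely weak; instead I would rely on the lifting theorem as a black box once the reduction to almost-normal approximation is in place.
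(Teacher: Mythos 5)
First, note that the paper does not prove this statement at all: it is quoted as Lin's theorem (with references to Lin, Friis--R{\o}rdam, and Loring--S{\o}rensen), so there is no internal proof to compare against. Judged on its own terms, your proposal gets the easy reductions right: setting $Z = A + iB$ gives $[Z,Z^*] = -2i[A,B]$, and splitting a nearby normal $N$ into real and imaginary parts shows that the theorem is equivalent to the statement that an $n\times n$ matrix $Z$ with $\|Z\|_2\le 2$ and small self-commutator is uniformly close to a normal matrix, with a modulus independent of $n$. Your fixed-$n$ compactness remark is correct, and the asymptotic-sequence-algebra reformulation is also correct: a uniform modulus exists if and only if every normal element of $\prod_k M_{n_k}(\mathbb{C})/\bigoplus_k M_{n_k}(\mathbb{C})$ arising from such a sequence lifts to a sequence of normal matrices. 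Your remarks on why this cannot be purely formal (the BDF index obstruction for the shift, triviality of the index in the matrix case) are accurate and show you understand where the difficulty sits.

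The gap is that this is precisely where your argument stops. The liftability of normal elements along that quotient (equivalently, the matricial weak semiprojectivity of $C(X)$ for planar $X$) is not a lemma you can cite independently of the theorem: it is the theorem, up to the elementary equivalences you have already established. Invoking it ``as a black box'' makes the proposal a faithful roadmap of how Lin, Friis--R{\o}rdam, and Loring organize the proof, but not a proof --- everything you actually verify (the almost-normal reduction, the fixed-$n$ compactness, the equivalence between a uniform modulus and liftability) is the routine part, and the dimension-independent analytic content (Lin's original argument, or the Friis--R{\o}rdam semiprojectivity proof with its spectral cutting and index cancellation) is assumed. Since the paper itself treats the result as an external citation, this is an honest position to be in, but you should state clearly that you are citing Lin's theorem rather than proving it; as written, the proposal would be circular if offered as a proof.
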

%

The combination of Theorems 2.1 and 2.2 implies that approximately jointly diagonalizable matrices are close to jointly diagonalizable matrices, and provides for an alternative to the joint diagonalization approaches used in \cite{eynard2012multimodal,KovBBKK:2013:EG}: instead of trying to approximately diagonalize the matrices $\Aa, \Bb$, we {\em minimally modify} $\Aa, \Bb$ to make them commute and thus become jointly diagonalizable,
\begin{eqnarray}
C(\Aa,\Bb) &=& \displaystyle \min_{ \tilde{\Aa}, \tilde{\Bb} }  
\| \tilde{\Aa} - \Aa \|_\mathrm{F}^2  + \| \tilde{\Bb} - \Bb \|_\mathrm{F}^2 
\,\,\, \text{s.t.} \,\,\, \| \tilde{\Aa}\tilde{\Bb} - \tilde{\Bb}\tilde{\Aa} \|_\mathrm{F}^2 = 0. 
\label{eq:problem}
\end{eqnarray} 

\noindent Finally, the following result\footnote{
An analogous theorem for the related problem of almost normal complex matrices is presented in \cite{Higham_matrix:nearness}, 
where it is attributed to \cite{Gabriel1979} 
and \cite{Causey1964}. 
} provides an even stronger connection between problems~(\ref{eq:problem}) and~(\ref{eq:jade}):
\begin{theo}
Let $\Aa, \Bb$ be symmetric matrices. Then, 
$$C(\Aa,\Bb) = J(\Aa,\Bb).$$ 
\end{theo}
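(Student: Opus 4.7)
The plan is to prove the two inequalities $C(\mathbf{A},\mathbf{B}) \le J(\mathbf{A},\mathbf{B})$ and $C(\mathbf{A},\mathbf{B}) \ge J(\mathbf{A},\mathbf{B})$ separately, using in both directions the unitary invariance of the Frobenius norm together with the fact that two commuting symmetric matrices share an orthonormal eigenbasis. I would first clarify (or restrict to) the natural case in which the candidate commuting pair $\tilde{\mathbf{A}},\tilde{\mathbf{B}}$ in (\ref{eq:problem}) is symmetric; since $\mathbf{A},\mathbf{B}$ are symmetric, replacing $\tilde{\mathbf{A}}$ by its symmetric part only decreases $\|\tilde{\mathbf{A}}-\mathbf{A}\|_\mathrm{F}$, and a parallel remark for $\tilde{\mathbf{B}}$ keeps commutativity a meaningful symmetric-matrix constraint.

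For the inequality $C \le J$, I would take any orthogonal $\hat{\mathbf{\Phi}}$ and \emph{construct} an admissible commuting pair by defining
\begin{equation*}
\tilde{\mathbf{A}} = \hat{\mathbf{\Phi}}\,\mathrm{Diag}(\hat{\mathbf{\Phi}}^\Tr \mathbf{A}\hat{\mathbf{\Phi}})\,\hat{\mathbf{\Phi}}^\Tr, \qquad \tilde{\mathbf{B}} = \hat{\mathbf{\Phi}}\,\mathrm{Diag}(\hat{\mathbf{\Phi}}^\Tr \mathbf{B}\hat{\mathbf{\Phi}})\,\hat{\mathbf{\Phi}}^\Tr.
\end{equation*}
Both are symmetric and diagonal in the basis $\hat{\mathbf{\Phi}}$, hence commute, and by unitary invariance of $\|\cdot\|_\mathrm{F}$,
\begin{equation*}
\|\tilde{\mathbf{A}}-\mathbf{A}\|_\mathrm{F}^2 = \|\mathrm{Diag}(\hat{\mathbf{\Phi}}^\Tr \mathbf{A}\hat{\mathbf{\Phi}}) - \hat{\mathbf{\Phi}}^\Tr \mathbf{A}\hat{\mathbf{\Phi}}\|_\mathrm{F}^2 = \mathrm{off}(\hat{\mathbf{\Phi}}^\Tr \mathbf{A}\hat{\mathbf{\Phi}}),
\end{equation*}
and analogously for $\mathbf{B}$. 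Summing and then minimising over $\hat{\mathbf{\Phi}}$ gives $C(\mathbf{A},\mathbf{B}) \le J(\mathbf{A},\mathbf{B})$.

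For the reverse inequality $C \ge J$, I would start from any feasible symmetric commuting pair $\tilde{\mathbf{A}},\tilde{\mathbf{B}}$ and invoke the standard fact that commuting symmetric matrices are simultaneously diagonalisable: there is an orthogonal $\hat{\mathbf{\Phi}}$ with $\hat{\mathbf{\Phi}}^\Tr\tilde{\mathbf{A}}\hat{\mathbf{\Phi}} = \mathbf{D}_A$ and $\hat{\mathbf{\Phi}}^\Tr\tilde{\mathbf{B}}\hat{\mathbf{\Phi}} = \mathbf{D}_B$ both diagonal. Writing $\mathbf{M}_A = \hat{\mathbf{\Phi}}^\Tr \mathbf{A}\hat{\mathbf{\Phi}}$ and $\mathbf{M}_B = \hat{\mathbf{\Phi}}^\Tr \mathbf{B}\hat{\mathbf{\Phi}}$ and splitting each into diagonal and off-diagonal parts,
\begin{equation*}
\|\tilde{\mathbf{A}}-\mathbf{A}\|_\mathrm{F}^2 = \|\mathbf{D}_A - \mathbf{M}_A\|_\mathrm{F}^2 = \|\mathbf{D}_A - \mathrm{Diag}(\mathbf{M}_A)\|_\mathrm{F}^2 + \mathrm{off}(\mathbf{M}_A) \;\ge\; \mathrm{off}(\mathbf{M}_A),
\end{equation*}
and similarly for $\mathbf{B}$. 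Summing over the two matrices and noticing that $\hat{\mathbf{\Phi}}$ is a feasible point of problem (\ref{eq:jade}) yields $\|\tilde{\mathbf{A}}-\mathbf{A}\|_\mathrm{F}^2 + \|\tilde{\mathbf{B}}-\mathbf{B}\|_\mathrm{F}^2 \ge J(\mathbf{A},\mathbf{B})$; taking the infimum over $(\tilde{\mathbf{A}},\tilde{\mathbf{B}})$ gives $C \ge J$.

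The one step I expect to require real care is the tacit symmetry reduction: if problem (\ref{eq:problem}) is read literally as an unconstrained minimisation over all commuting pairs, one must rule out non-symmetric minimisers. The cleanest way I would argue this is to use the orthogonal decomposition of $\mathbb{R}^{n\times n}$ into symmetric and skew-symmetric parts: because $\mathbf{A}$ is symmetric, $\|\tilde{\mathbf{A}}-\mathbf{A}\|_\mathrm{F}^2 = \|\tilde{\mathbf{A}}^s - \mathbf{A}\|_\mathrm{F}^2 + \|\tilde{\mathbf{A}}^a\|_\mathrm{F}^2$ with $\tilde{\mathbf{A}}^s,\tilde{\mathbf{A}}^a$ the symmetric/skew parts, and similarly for $\mathbf{B}$; then a small perturbation argument (or a direct construction of a symmetric commuting pair with the same objective value using the common invariant subspaces of $\tilde{\mathbf{A}},\tilde{\mathbf{B}}$) shows it is never worse to take symmetric $\tilde{\mathbf{A}},\tilde{\mathbf{B}}$. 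Once this reduction is in place, the two inequalities above are essentially one-line Frobenius identities.
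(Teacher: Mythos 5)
Your proof is correct and takes essentially the same route as the paper: for $C\le J$ both you and the paper build the commuting pair $\hat{\mathbf{\Phi}}\,\mathrm{Diag}(\hat{\mathbf{\Phi}}^{\mathrm{T}}\mathbf{A}\hat{\mathbf{\Phi}})\,\hat{\mathbf{\Phi}}^{\mathrm{T}}$ (and likewise for $\mathbf{B}$) and use unitary invariance, and for $C\ge J$ both jointly diagonalize the commuting pair and observe that discarding the diagonal residual can only decrease the Frobenius cost. The symmetry reduction you single out as delicate is simply left implicit in the paper (its proof asserts that the closest commuting pair is unitarily jointly diagonalizable, i.e., it reads the feasible set of problem~(\ref{eq:problem}) as symmetric commuting pairs), so under that reading your argument is complete as is; your extra sketch for excluding non-symmetric minimizers is not part of the paper's proof and, as you correctly note, would need more than the symmetric/skew split since symmetric parts of commuting matrices need not commute.
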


\begin{proof}
Let us denote 
\begin{eqnarray}
C(\Aa,\Bb,\Xx,\Yy) &=&  
\| \Aa-\Xx \|_\mathrm{F}^2  + \|\Bb-\Yy \|_\mathrm{F}^2; \nonumber \\
J(\Aa,\Bb,\Uu) &=& \|\Uu^\Tr\Aa\Uu-\text{Diag}(\Uu^*\Aa\Uu)\|^2 + \|\Uu^*\Bb\Uu-\text{Diag}(\Uu^\Tr\Bb\Uu)\|^2,\nonumber
\end{eqnarray}
where $\Xx, \Yy$ is a pair of commuting matrices, and $\Uu$ is a unitary matrix. 
Let $\hat{\Uu}$ be the joint approximate eigenbasis of $\Aa, \Bb$ such that
 $J(\Aa,\Bb,\hat{\Uu})=J(\Aa,\Bb)$. We further define 
 \begin{eqnarray*}
 \tilde{\Aa}&=&\hat\Uu\text{Diag}(\hat{\Uu}^\Tr \Aa\hat{\Uu})\hat{\Uu}^\Tr; \\
 \tilde{\Bb}&=&\hat\Uu\text{Diag}(\hat{\Uu}^\Tr \Bb\hat{\Uu})\hat{\Uu}^\Tr.
 \end{eqnarray*}
Using the fact that the Frobenius norm is invariant under unitary transformations, we get the following sequence of inequalities:
\begin{eqnarray}
C(\Aa,\Bb)&\leq&C(\Aa,\Bb,\tilde{\Aa},\tilde{\Bb})\nonumber\nonumber  \\
&=&\|\Aa-\hat\Uu\text{Diag}(\hat{\Uu}^\Tr\Aa\hat{\Uu})\hat{\Uu}^\Tr\|_\mathrm{F}^2+\|\Bb-\hat\Uu\text{Diag}(\hat{\Uu}^\Tr\Bb\hat{\Uu})\hat{\Uu}^\Tr\|_\mathrm{F}^2\nonumber \\
&=& \|\hat{\Uu}^\Tr\Aa\hat\Uu-\text{Diag}(\hat{\Uu}^\Tr\Aa\hat{\Uu})\|_\mathrm{F}^2+\|\hat{\Uu}^\Tr\Bb\hat\Uu-\text{Diag}(\hat{\Uu}^\Tr\Bb\hat{\Uu})\|_\mathrm{F}^2\nonumber \\
&=&J(\Aa,\Bb,\hat{\Uu}) 
=J(\Aa,\Bb).
\end{eqnarray}

Now suppose that $\tilde{\Aa}$ and $\tilde{\Bb}$ are the closest commuting matrices to $\Aa, \Bb$ such that  $C(\Aa,\Bb,\tilde{\Aa},\tilde{\Bb})=C(\Aa,\Bb)$. 
Commuting matrices $\tilde{\Aa}, \tilde{\Bb}$ are jointly diagonalizable by a unitary matrix that we denote by $\tilde{\Uu}$. Since changing a zero-term in a matrix to a non-zero term can only increase the Frobenius norm, we get
\begin{eqnarray}
J(\Aa,\Bb)&\leq&J(\Aa,\Bb,\tilde{\Uu})\nonumber \\
&=& \|\tilde{\Uu}^\Tr\Aa\tilde\Uu-\text{Diag}(\tilde{\Uu}^\Tr\Aa\tilde{\Uu})\|_\mathrm{F}^2+\|\tilde{\Uu}^\Tr\Bb\tilde\Uu-\text{Diag}(\tilde{\Uu}^\Tr\Bb\tilde{\Uu})\|_\mathrm{F}^2\nonumber \\
&\leq& \|\tilde{\Uu}^\Tr\Aa\tilde\Uu-\text{Diag}(\tilde{\Uu}^\Tr\tilde{\Aa}\tilde{\Uu})\|_\mathrm{F}^2+\|\tilde{\Uu}^\Tr\Bb\tilde\Uu-\text{Diag}(\tilde{\Uu}^\Tr\tilde{\Bb}\tilde{\Uu})\|_\mathrm{F}^2\nonumber \\
&=&\|\Aa-\tilde\Uu\text{Diag}(\tilde{\Uu}^\Tr\tilde{\Aa}\tilde{\Uu})\tilde{\Uu}^\Tr\|_\mathrm{F}^2+\|\Bb-\tilde\Uu\text{Diag}(\tilde{\Uu}^\Tr\tilde{\Bb}\tilde{\Uu})\tilde{\Uu}^\Tr\|_\mathrm{F}^2\nonumber \\
&=&\|\Aa-\tilde{\Aa}\|_\mathrm{F}^2+\|\Bb-\tilde{\Bb}\|_\mathrm{F}^2 =C(\Aa,\Bb).
\label{}
\end{eqnarray}
\end{proof}

\noindent Because of $C(\Aa,\Bb)=J(\Aa,\Bb)$, all inequalities in the proof of Theorem~2.3 turn out to be equalities, so we immediately get the following  
%
%

\begin{corollary}
Let $\Aa,\Bb$ be symmetric matrices.

\noindent 1. Let $\hat{\bb{U}}$ be the approximate joint eigenbasis of $\Aa, \Bb$ such that $J(\Aa,\Bb, \hat{\Uu}) = J(\Aa,\Bb)$. 
Then, 
$\tilde{\Aa} = \hat{\bb{U}}\mathrm{Diag}(\hat{\bb{U}}^\Tr \Aa \hat{\bb{U}}) \hat{\bb{U}}^\Tr$ and  
$\tilde{\Bb} = \hat{\bb{U}}\mathrm{Diag}(\hat{\bb{U}}^\Tr \Bb \hat{\bb{U}}) \hat{\bb{U}}^\Tr$ 
are the closest commuting matrices to $\Aa,\Bb$ such that $C(\Aa,\Bb, \tilde{\Aa}, \tilde{\Bb}) = C(\Aa,\Bb)$.

\noindent 2. Let $\tilde{\Aa},\tilde{\Bb}$ be the closest commuting matrices such that $C(\Aa,\Bb, \tilde{\Aa}, \tilde{\Bb}) = C(\Aa,\Bb)$.  
Then, their joint eigenbasis $\hat{\Uu}$ satisfied $J(\Aa,\Bb, \hat{\Uu}) = J(\Aa,\Bb)$. 
\end{corollary}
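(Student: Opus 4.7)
The proof plan is essentially a bookkeeping exercise: once Theorem~2.3 establishes the identity $C(\Aa,\Bb)=J(\Aa,\Bb)$, both parts of the corollary fall out of the fact that the two chains of inequalities constructed in the proof cannot be strict anywhere.

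For part 1, I would start from a minimizer $\hat{\Uu}$ of the joint diagonalization functional, so $J(\Aa,\Bb,\hat{\Uu})=J(\Aa,\Bb)$, and define $\tilde{\Aa},\tilde{\Bb}$ as in the statement. The first chain in the proof of Theorem~2.3 gives
\[
C(\Aa,\Bb)\;\leq\;C(\Aa,\Bb,\tilde{\Aa},\tilde{\Bb})\;=\;J(\Aa,\Bb,\hat{\Uu})\;=\;J(\Aa,\Bb).
\]
Combining this with $C(\Aa,\Bb)=J(\Aa,\Bb)$ forces $C(\Aa,\Bb,\tilde{\Aa},\tilde{\Bb})=C(\Aa,\Bb)$, which is precisely the claim that $\tilde{\Aa},\tilde{\Bb}$ are closest commuting matrices to $\Aa,\Bb$. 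One should only check that $\tilde{\Aa}$ and $\tilde{\Bb}$ do commute, which is immediate since both are diagonalized by the same unitary $\hat{\Uu}$.

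For part 2, I would run the argument symmetrically. Let $\tilde{\Aa},\tilde{\Bb}$ be closest commuting matrices and let $\tilde{\Uu}$ be a common unitary diagonalizing them. The second chain in the proof of Theorem~2.3 yields
\[
J(\Aa,\Bb)\;\leq\;J(\Aa,\Bb,\tilde{\Uu})\;\leq\;C(\Aa,\Bb,\tilde{\Aa},\tilde{\Bb})\;=\;C(\Aa,\Bb).
\]
Again invoking $C(\Aa,\Bb)=J(\Aa,\Bb)$ squeezes the middle term, so $J(\Aa,\Bb,\tilde{\Uu})=J(\Aa,\Bb)$, which is exactly the assertion.

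There is no real obstacle here, since the work has already been done in Theorem~2.3; the only minor subtlety is confirming that at each step a valid feasible point is being substituted into the correct functional (a commuting pair into $C(\Aa,\Bb,\cdot,\cdot)$ and a unitary into $J(\Aa,\Bb,\cdot)$), and that the construction $\tilde{\Aa}=\hat{\Uu}\,\text{Diag}(\hat{\Uu}^\Tr\Aa\hat{\Uu})\,\hat{\Uu}^\Tr$ always yields commuting matrices in part 1, and that common eigenbases of commuting symmetric matrices always exist for part 2. Both facts are standard, so the proof reduces to quoting the inequalities from the previous theorem and noting that equality of the endpoints forces equality throughout.
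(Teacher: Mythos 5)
Your argument is correct and is exactly the route the paper takes: the corollary is obtained by combining the identity $C(\Aa,\Bb)=J(\Aa,\Bb)$ from Theorem~2.3 with the two inequality chains in its proof, which are thereby forced to be equalities. Your additional checks (that the constructed $\tilde{\Aa},\tilde{\Bb}$ commute because they share the diagonalizer $\hat{\Uu}$, and that commuting symmetric matrices admit a joint eigenbasis) are the same implicit facts the paper relies on, so nothing is missing.
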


\noindent In other words, the joint approximate diagonalization problem~(\ref{eq:jade}) and the closest commuting matrices problem~(\ref{eq:problem})  are {\em equivalent}, and we can solve one by solving the other. 
However, the big advantage of~(\ref{eq:problem}) is that we have explicit control over the structure of the resulting matrices $\tilde{\Aa}, \tilde{\Bb}$, while in~(\ref{eq:jade}) this is impossible.  
In particular, when applied to Laplacian matrices, we cannot guarantee that the matrices 
$\hat{\Ll}_k = \hat{\bb{\Phi}}\mathrm{Diag}(\hat{\bb{\Phi}}^\Tr \Ll_k \hat{\bb{\Phi}}) \hat{\bb{\Phi}}^\Tr$ obtained by approximate diagonalization of $\Ll_k$ are legal Laplacians (see Figure~\ref{fig:sparsity}).

In the following section, we solve problem~(\ref{eq:problem}) on the subset of Laplacian matrices and explore its application to the construction of multimodal spectral geometry.

\begin{figure}
\center{
\begin{overpic}
  [width=1\linewidth]{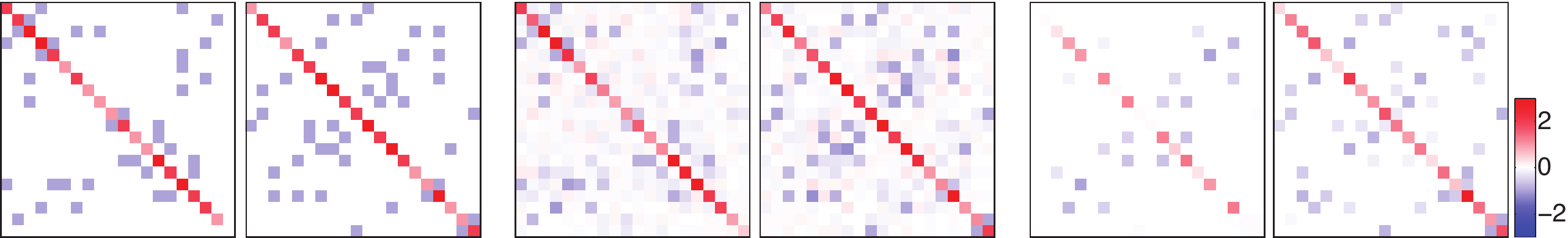}
\put(8,-3){\footnotesize $\Ll_1$ }  
\put(23,-3){\footnotesize $\Ll_2$ }  
\put(40,-3){\footnotesize $\hat{\Ll}_1$ }  
\put(55,-3){\footnotesize $\hat{\Ll}_2$ }  
\put(73,-3){\footnotesize $\tilde{\Ll}_1$ }  
\put(88,-3){\footnotesize $\tilde{\Ll}_2$ }  
\put(11,16){\footnotesize Original }  
\put(44,16){\footnotesize JADE}  
\put(78,16){\footnotesize CCO}  
\end{overpic}
  \caption{\label{fig:sparsity}  \small Comparison of the result of joint diagonalization (center) and closest commuting operator (right) problems applied to a pair of Laplacians (left). JADE does not preserve the sparse structure of the Laplacians. Even worse, matrices $\hat{\Ll}_k$ are not legal Laplacians as the sum of their rows is not zero anymore. } 
}
\end{figure}

\section{Problem formulation}

%
%

Denote by $L(V,E)$  
the set of Laplacian matrices of an undirected graph $(V,E)$ with arbitrary edge weights. 
Let us be given two undirected graphs $G_k = (V,E_k),\, k = 1, 2$ with adjacency matrices $\Ww_k$  and Laplacians $\Ll_k$,  
%
%
%
%
%
and let  $\tilde{G}_k = (V,\tilde{E}_k)$ be new graphs, where 
the edges $\tilde{E}_k$ are defined either as 
$\tilde{E}_k = E_k$ (the connectivity of $\tilde{G}_k$ is identical to that of $G_k$), or as 
$\tilde{E}_k = \bigcup_{k=1}^2 E_k$ (the connectivity of $\tilde{G}_k$ is a union of the edge sets of $G_1$ and $G_2$). 
We denote their respective adjacency matrices by  $\tilde{\Ww}_k$ and the Laplacians  by $\tilde{\Ll}_k$.

We are looking for such edge weights that $\tilde{\Ll}_1  \in L(V,\tilde{E}_1)$ and $\tilde{\Ll}_2 \in L(V,\tilde{E}_2)$ {\em commute} and are {\em as close as possible} to $\Ll_1, \Ll_2$: 
\begin{eqnarray}
\label{eq:cost_pre}
C_L(\Ll_1, \Ll_2) & = & 
\displaystyle \min_{ \tilde{\Ll}_k \in L(V,\tilde{E}_k) }  
\sum_{k=1}^2 \| \tilde{\Ll}_k - \Ll_k \|_\mathrm{F}^2 
\,\,\, \text{s.t.} \,\,\, \| [\tilde{\Ll}_1, \tilde{\Ll}_2] \|_\mathrm{F}^2 = 0. 
\end{eqnarray}
Problem~(\ref{eq:cost_pre}) is a version of problem~(\ref{eq:problem}) where the space of the matrices is restricted to valid Laplacians with the same structure as $\bb{\Ll}_1, \bb{\Ll}_2$.
We call the Laplacians $\tilde{\Ll}_1, \tilde{\Ll}_2$ produced by solving~(\ref{eq:cost_pre}) the {\em closest commuting operators} (CCO).

Since $\tilde{\Ll}_1, \tilde{\Ll}_2$ commute, they are jointly diagonalizable, i.e., we can find a single eigenbasis $\tilde{\bb{\Phi}}$ such that $\tilde{\bb{\Phi}}^\Tr \tilde{\bb{L}}_k \tilde{\bb{\Phi}} = \tilde{\bb{\Lambda}}_k =  \mathrm{diag}(\tilde{\lambda}_{k,1}, \hdots, \tilde{\lambda}_{k,n})$.\footnote{Individual diagonalization of $\tilde{\Ll}_k$ does not guarantee that the respective eigenvectors are identical, as the eigenvectors are defined up to a sign (for matrices with simple spectrum), or more generally, up to an isometry in the eigen sub-spaces corresponding to eigenvalues with multiplicity greater than $1$. It therefore makes sense to jointly diagonalize $\tilde{\Ll}_k$ using e.g. JADE even in this case, see \cite{Bunse-Gerstner:1993}.}
W.r.t. to this eigenbasis, we can write the heat operators
\begin{eqnarray}
\tilde{\bb{H}}_k^t = e^{-t \tilde{\Ll}_k} = \tilde{\bb{\Phi}} e^{-t \tilde{\bb{\Lambda}}_k} \tilde{\bb{\Phi}}^\Tr,  
\label{eq:hk_}
\end{eqnarray}
and diffusion distances 
\begin{eqnarray}
\tilde{d}_{k,t}(x_p,x_q) = \left( \sum_{i=1}^n e^{-2t \lambda_{k,i} } \tilde{\phi}_{pi} \tilde{\phi}_{qi} \right)^{1/2}.    
\label{eq:diffdist_c}
\end{eqnarray}
%


\subsection{Existence of CCOs}

An important question is how far the CCOs $\tilde{\bb{L}}_1, \tilde{\bb{L}}_2$ can be from the original Laplacians $\Ll_1, \Ll_2$?
We should stress that Lin's Theorem 2.2 is not directly applicable to our problem~(\ref{eq:cost_pre}): it guarantees that if $\| \Ll_1\Ll_2 - \Ll_2 \Ll_1\|_\mathrm{F} \leq \epsilon$,  there exist two {\em arbitrary} 
matrices $\delta(\epsilon)$-close to $\Ll_1, \Ll_2$, while we are looking for two {\em Laplacians with the same structure}. 
The question is therefore whether there exists a version of Theorem 2.2 that holds for a subset of such matrices.

While answering this question is a subject for future theoretical research, we provide empirical evidence that almost-commuting Laplacians are close to commuting Laplacians. 
In our experiment shown in Figure~\ref{fig:random}, we generated $1270$ pairs of random Laplacian matrices of sizes $n=10, 15, 20, 25, 30, 40$ and $50$, with random $K$-neighbor connectivity ($K$ random per vertex, ranging between $1$ and $10$) and weights uniformly distributed in the interval $[0,1]$. 
We consider  two matrices `numerically commuting' if the Frobenius norm of their commutator is below $10^{-7}n$. 
The behavior observed in Figure~\ref{fig:random} suggests the following 
\begin{conjecture}
There exists a function $\delta(\epsilon)$ satisfying $\lim_{\epsilon\rightarrow 0}\delta(\epsilon) = 0$, such that 
$$
C_L(\Ll_1, \Ll_2) \leq \delta(J(\Ll_1, \Ll_2)). 
$$
\end{conjecture}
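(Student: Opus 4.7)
The plan is to combine Theorem~2.3 with a structural perturbation argument restricted to the affine space of legal Laplacians. By Theorem~2.3, $J(\Ll_1,\Ll_2) = C(\Ll_1,\Ll_2)$, so there already exists a commuting pair of symmetric matrices $(\tilde{\Aa},\tilde{\Bb})$ whose combined Frobenius squared distance to $(\Ll_1,\Ll_2)$ equals $J(\Ll_1,\Ll_2)$. The remaining task is to show that from any such commuting pair one can produce a commuting pair in $L(V,\tilde{E}_1)\times L(V,\tilde{E}_2)$ whose distance to $(\Ll_1,\Ll_2)$ is controlled by $J(\Ll_1,\Ll_2)$ alone.

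For each fixed $n$ and each fixed pair of sparsity patterns, the conjecture can be settled by a compactness-and-continuity argument. The functional $J$ is continuous on pairs of symmetric matrices since it is the minimum of a jointly continuous function over the compact orthogonal group. For $C_L$, upper semicontinuity follows by plugging the optimizer at the limit into the nearby problem, and lower semicontinuity from the fact that the minimizers can be restricted to a bounded subset of the closed affine subspace $L(V,\tilde{E}_k)$ on which the commutativity constraint is closed, so a diagonal argument produces a feasible accumulation point. If the conjectured $\delta$ did not exist, one could extract a sequence $(\Ll_1^{(j)},\Ll_2^{(j)}) \to (\Ll_1^\star,\Ll_2^\star)$ with $J(\Ll_1^{(j)},\Ll_2^{(j)})\to 0$ but $C_L(\Ll_1^{(j)},\Ll_2^{(j)})$ bounded below. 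Continuity would then give $J(\Ll_1^\star,\Ll_2^\star)=0$, hence $\Ll_1^\star$ and $\Ll_2^\star$ already commute, so $C_L(\Ll_1^\star,\Ll_2^\star)=0$, a contradiction. This yields a function $\delta$ depending a priori on $n$ and the two sparsity patterns.

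To obtain an $n$-uniform modulus of continuity of the kind found in Lin's Theorem~2.2, the plan would be to adapt the constructive perturbation arguments of Huang--Lin, R{\o}rdam, and Loring--S{\o}rensen so that every intermediate matrix remains in the Laplacian subspace. The natural scheme is alternating projection: start from the Lin commuting approximants $(\tilde{\Aa},\tilde{\Bb})$, orthogonally project each onto the linear subspace of symmetric matrices with the correct zero pattern and zero row sums (a modification whose Frobenius size is controlled by how far $\tilde{\Aa},\tilde{\Bb}$ are from $\Ll_1,\Ll_2$), then re-equalize the spectra by a small unitary conjugation to restore commutativity, and iterate until a fixed point is reached. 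Bounding the iteration via a contraction-mapping estimate would give an explicit $\delta$.

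The main obstacle is precisely the last step: the projection onto the Laplacian subspace is not intertwined with commutator reduction, and restoring commutativity by a Lin-type unitary conjugation in general destroys the sparse structure just imposed. Controlling the interaction between these two projections onto incompatible sets — one defined by matrix structure and one by the commutativity variety — and proving contraction of the alternating iteration, is where the genuine work lies. This difficulty is exactly what prevents a direct appeal to Theorem~2.2, as the authors note, and is why the statement appears here as a conjecture rather than a theorem.
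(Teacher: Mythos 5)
A preliminary remark: the paper does not prove this statement. It is posed as a conjecture, supported only by the numerical experiment of Figure~2, and the authors explicitly leave its justification (or disproval) to future work. So there is no proof to compare yours against; the only question is whether your argument closes the conjecture, and it does not.

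Your soft step is essentially sound but proves less than the conjecture asks. Two caveats: the extraction of a convergent subsequence of input pairs requires a normalization (e.g.\ $\|\Ll_k\|_\mathrm{F}=1$ as in Theorem~2.1); without it the set of Laplacians with a fixed pattern is unbounded and the compactness argument collapses. Granting that, and noting that for fixed $n$ there are only finitely many sparsity patterns, continuity of $J$ (a minimum over the compact orthogonal group) together with upper semicontinuity of $C_L$ (plug the limit's optimizer into the nearby problem, the constraint set being fixed) does yield a modulus $\delta_n$. But this is the easy, dimension-dependent statement. Read in parallel with Lin's Theorem~2.2, whose entire difficulty is independence of $n$, and with Figure~2, where graphs of different sizes are plotted against a single envelope, the substance of the conjecture is a single $\delta$ uniform in $n$ and in the graph structure, and the compactness argument cannot deliver that.

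For the uniform statement your alternating-projection scheme is a research plan, not a proof, as you yourself concede. Concretely: (i) the structural set is not a linear subspace but a polyhedral cone --- a legal Laplacian requires nonnegative edge weights (nonpositive off-diagonal entries) in addition to symmetry, the zero pattern and zero row sums --- so the projection step must be a cone projection, which changes the geometry of the iteration; (ii) the step ``re-equalize the spectra by a small unitary conjugation to restore commutativity'' is unjustified: after projecting onto the structural cone the pair generically no longer commutes, and producing a nearby commuting pair \emph{inside the cone} is precisely the problem being solved, not a subroutine one may invoke; (iii) no contraction estimate is given, and alternating projections between a convex cone and the non-convex variety of commuting pairs need not converge to the intersection, let alone with a total displacement bounded by $\delta(J(\Ll_1,\Ll_2))$; (iv) Lin's theorem is non-constructive, so ``start from the Lin approximants and iterate'' has no quantitative base case to feed such an estimate. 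Since points (ii)--(iii) are exactly what you flag as ``where the genuine work lies,'' the conjecture remains open after your argument, just as it is in the paper.
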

\noindent Stated differently, from Theorem 2.3 we know that $C_L(\Ll_1,\Ll_2) \geq C(\Ll_1,\Ll_2) = J(\Ll_1,\Ll_2)$. We conjecture that if the Laplacians $\Ll_1, \Ll_2$ almost commute, then $C_L(\Ll_1,\Ll_2)$ is close to $C(\Ll_1,\Ll_2)$. 

A counterexample to Conjecture~3.1  would be a point in Figure~\ref{fig:random} with small x-coordinate and large y-coordinate, which is not observed in our experiments.  
We leave the theoretical justification of this conjecture (or its disproval) for future work.

%
%
%

\begin{figure}
\center{
\begin{overpic}
  [width=0.65\linewidth]{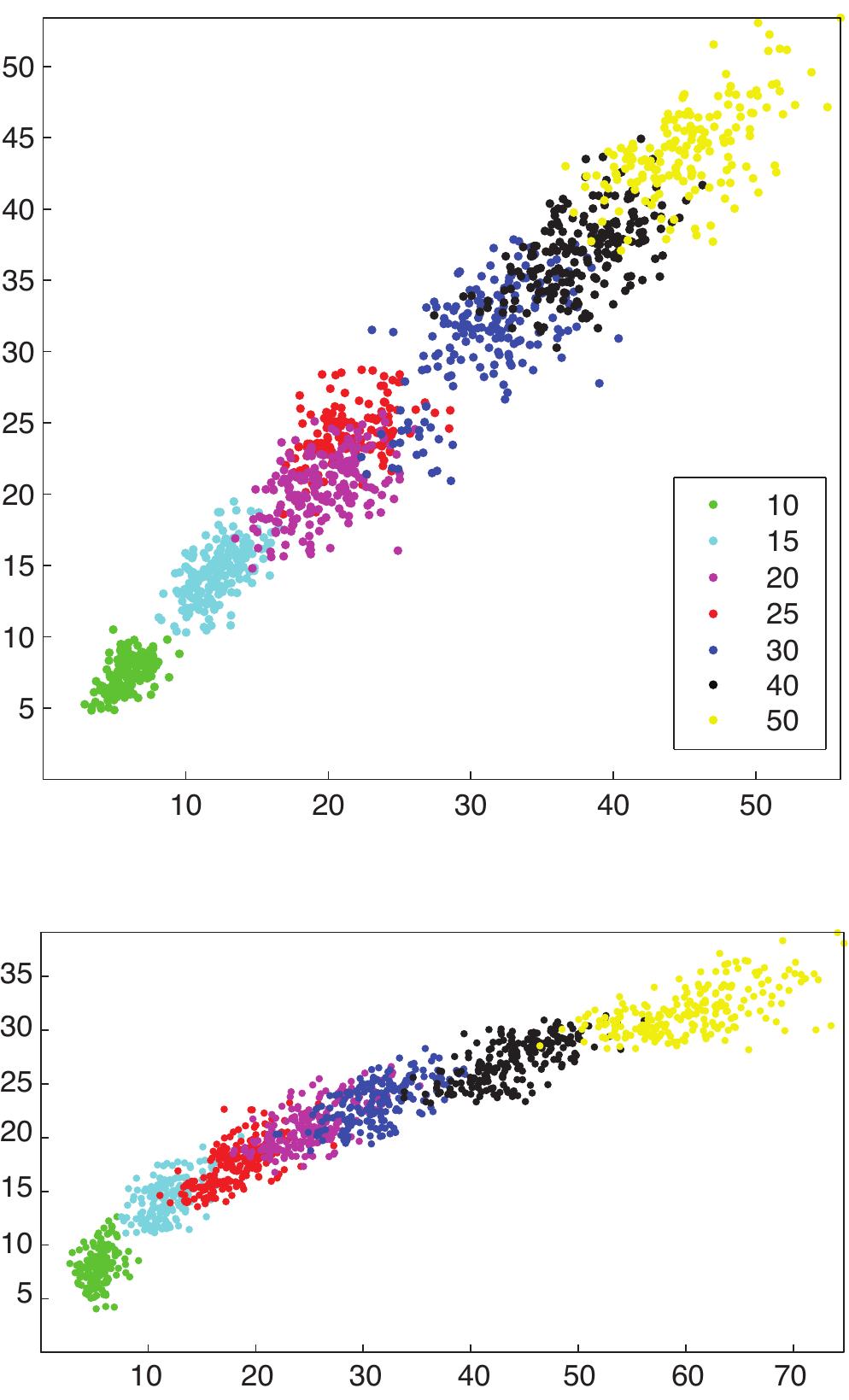}
\put(28,-2){\footnotesize $J(\Ll_k, \Ll_l)$ }
\put(28,38){\footnotesize $\| [\Ll_k, \Ll_l] \|_\mathrm{F}$ }
\put(-4.5,10){\footnotesize 
\rotatebox{90}{
$C_L^{1/2}(\Ll_k, \Ll_l)$
}
}  
\put(-4.5,63){\footnotesize 
\rotatebox{90}{
$C_L^{1/2}(\Ll_k, \Ll_l)$
}
} 
\end{overpic}\vspace{3mm}
  \caption{\label{fig:random}  \small Numerical evidence that almost-commuting Laplacians are close to commuting Laplacians, obtained on random graphs of different size (shown by color) and connectivity.  
  }
}
\end{figure}

\section{Numerical optimization}

In problem~(\ref{eq:cost_pre}), we are looking for new Laplacians $\tilde{\Ll}_k \in L(V,\tilde{E}_k)$. 
Let us denote by 
$\uu_k = (u^k_1, \hdots, u^k_{\tilde{M}_k})$ the edge weights of the new graphs (here, due to symmetry, $\tilde{M}_k = \frac{1}{2}|\tilde{E}_k|$).  
We parametrize the new adjacency matrices $\tilde{\Ww}_k(\uu_k)$ as 
\begin{eqnarray}
\tilde{w}^k_{ij}(\uu_k) &=& \left\{
\begin{array}{cc}
u_l^k & (i_l,j_l) \,\, \text{or} \,\, (j_l,i_l) \in \tilde{E}_k\\
0 & \text{else}.
\end{array}
\right. 
\end{eqnarray}
%
%
Then, we can rewrite~(\ref{eq:cost_pre}) as  
\begin{eqnarray}
\displaystyle \min_{ \{ \uu_k \geq 0
\}_{k=1}^2 }  &&
\sum_{k=1}^2 \| \tilde{\Ll}_k(\uu_k) - \Ll_k \|_\mathrm{F}^2 
\,\,\, \text{s.t.} \,\,\, \| [\tilde{\Ll}_1(\uu_1), \tilde{\Ll}_2(\uu_2)] \|_\mathrm{F}^2 = 0. 
\label{eq:cost_}
\end{eqnarray}
%
%
%
In practice, it is more convenient to solve an unconstrained formulation of problem~(\ref{eq:cost_}), 
\begin{eqnarray}
\displaystyle \min_{ \{ \uu_k \geq 0
\}_{k=1}^2 }  &&
\sum_{k=1}^2 \| \tilde{\Ll}_k(\uu_k) - \Ll_k \|_\mathrm{F}^2 + \alpha \| [\tilde{\Ll}_1(\uu_1), \tilde{\Ll}_2(\uu_2)] \|_\mathrm{F}^2, 
\label{eq:cost}
\end{eqnarray}
where $\alpha>0$ is a weight parameter.

The solution of problem~(\ref{eq:cost}) is carried out using standard first-order optimization techniques, where to ensure that we obtain a legal Laplacian, the weights are projected onto the interval $[0,1]$ after each iteration. 

We differentiate the cost function in~(\ref{eq:cost}) w.r.t. the elements of the matrices $\tilde{\bb{W}}_k$, out of which only the relevant elements $\tilde{w}^k_{ij} : (i,j) \in \tilde{E}_k$ are used. 
The derivative of the distance terms in~(\ref{eq:cost}) are given by
$$
\frac{\partial}{\partial \tilde{w}^k_{ij}} \| \Ll_k -  \tilde{\Ll}_k \|_\mathrm{F}^2 
= 2\left( \bb{O} + \Ll_k
- \tilde{\Ll}_k, 
\right)_{ij}, 
$$
where $\bb{O}$ is an $n\times n$ matrix with equal columns $\bb{O} = ( \mathrm{diag}(\Ll_k), \hdots,   \mathrm{diag}(\Ll_k) )$ (in our notation, $\mathrm{diag(\Aa)}$ is a column vector containing the diagonal elements of $\Aa$). 
The derivative of the commutator term w.r.t. the elements of the matrix $\tilde{\bb{W}}_1$ is given by 
$$
\frac{\partial}{\partial \tilde{w}^1_{ij}} \| \tilde{\Ll}_1 \tilde{\Ll}_2 - \tilde{\Ll}_2 \tilde{\Ll}_1 \|_\mathrm{F}^2 = 2 \left (  
\bb{O}_1 - \tilde{\Ll}_2^\Tr(\tilde{\Ll}_2\tilde{\Ll}_1 - \tilde{\Ll}_1\tilde{\Ll}_2) 
- \bb{O}_2 + (\tilde{\Ll}_2\tilde{\Ll}_1 - \tilde{\Ll}_1\tilde{\Ll}_2)\tilde{\Ll}_2^\Tr
\right)_{ij},
$$
where $\bb{O}_k$ are $n \times n$ matrices with equal columns given by 
\begin{eqnarray*}
\bb{O}_1 &=& ( \mathrm{diag}(\tilde{\Ll}_2^\Tr(\tilde{\Ll}_2\tilde{\Ll}_1 - \tilde{\Ll}_1\tilde{\Ll}_2)), \hdots,   \mathrm{diag}(\tilde{\Ll}_2^\Tr(\tilde{\Ll}_2\tilde{\Ll}_1 - \tilde{\Ll}_1\tilde{\Ll}_2)) ),\\
\bb{O}_2 &=& ( \mathrm{diag}((\tilde{\Ll}_2\tilde{\Ll}_1 - \tilde{\Ll}_1\tilde{\Ll}_2)\tilde{\Ll}_2^\Tr), \hdots,   \mathrm{diag}((\tilde{\Ll}_2\tilde{\Ll}_1 - \tilde{\Ll}_1\tilde{\Ll}_2)\tilde{\Ll}_2^\Tr) ).
\end{eqnarray*}
By symmetry considerations, 
$$
\frac{\partial}{\partial \tilde{w}^2_{ij}} \| \tilde{\Ll}_1 \tilde{\Ll}_2 - \tilde{\Ll}_2 \tilde{\Ll}_1 \|_\mathrm{F}^2 = 
-\frac{\partial}{\partial \tilde{w}^1_{ij}} \| \tilde{\Ll}_1 \tilde{\Ll}_2 - \tilde{\Ll}_2 \tilde{\Ll}_1 \|_\mathrm{F}^2. 
$$

\section{Generalizations}

Our problem formulation~(\ref{eq:cost}) assumes that the two graphs have the same set of vertices $V$ and different edges $E_k$, 
having thus Laplacians $\Ll_k$ of equal size $n\times n$. 
%
A more general setting is of two graphs with different sets of vertices and edges, $G_k = (V_k, E_k)$, $|V_k| = n_k$, and 
the corresponding Laplacians of size  $n_k \times n_k$.

Our CCO problem can be extended to this setting using the notion of {\em functional correspondence} \cite{ovsjanikov2012functional}, expressed at an $n_2 \times n_1$ matrix $\bb{T}_{12}$ transferring functions defined on $V_1$ to $V_2$, and an $n_1 \times n_2$ matrix $\bb{T}_{21}$ going the other way around. 
In this setting, we can define an operator on the space of functions $L^2(V_1)$ by the composition $\bb{T}_{21}\tilde{\Ll}_2\bb{T}_{12}$ (or, equivalently, an operator on the space of functions $L^2(V_2)$ as $\bb{T}_{12}\tilde{\Ll}_1\bb{T}_{21}$). 
Our problem thus becomes
\begin{eqnarray}
\displaystyle \min_{ \{ \uu_k \geq 0
\}_{k=1}^2 }  
\sum_{k=1}^2 \| \tilde{\Ll}_k(\uu_k) - \Ll_k \|_\mathrm{F}^2 + \alpha \| [\tilde{\Ll}_1(\uu_1), \bb{T}_{21}\tilde{\Ll}_2(\uu_2)\bb{T}_{12}] \|_\mathrm{F}^2. 
\label{eq:cost_fun}
\end{eqnarray}
We call the term $[\tilde{\Ll}_1, \bb{T}_{21}\tilde{\Ll}_2\bb{T}_{12}]$ the {\em generalized commutator} of $\tilde{\Ll}_1$ and $\tilde{\Ll}_2$.

The functional correspondence $\bb{T}$ can be assumed to be given, or found from a set of corresponding vectors as proposed by Ovsjanikov et al. \cite{ovsjanikov2012functional}: given a set of functions $\bb{F} = (\bb{f}_1, \hdots, \bb{f}_q)$ on $V_1$ and corresponding functions  $\bb{G} = (\bb{g}_1, \hdots, \bb{g}_q)$ on $V_2$ (such that $\bb{T}_{12}\bb{f}_i = \bb{g}_i$), one can decompose $\bb{F}$ and $\bb{G}$ in the first $m$ eigenvectors $\bar{\bb{\Phi}}_k = (\bb{\phi}^k_1, \hdots, \bb{\phi}^k_m)$ of the corresponding Laplacians $\bb{L}_k$, yielding a system of $q$ equations with $m^2$ variables 
\begin{eqnarray}
\bb{C}  \bar{\bb{\Phi}}_1^\Tr \bb{F} =  \bar{\bb{\Phi}}_2^\Tr \bb{G},
\label{eq:fcorr}
\end{eqnarray}
where the $m\times m$ matrix $\bb{C}$ translates Fourier coefficients between the bases $\bar{\bb{\Phi}}_1$ and $\bar{\bb{\Phi}}_2$. 
The correspondence can be thus represented as $\bb{T}_{12} =  \bar{\bb{\Phi}}_2 \bb{C}  \bar{\bb{\Phi}}_1^\Tr$.
(A more general setting of finding the matrix $\bb{C}$ when the order of the columns of $\bb{F}, \bb{G}$ is unknown and outliers are present was discussed by Pokrass et al. \cite{pokrass2013sparse}).

\section{Results}

In this section, we demonstrate our CCO approach on several synthetic and real datasets coming from shape analysis, manifold learning, and pattern recognition problems. 
The leitmotif of all the experiments is, given two datasets representing similar objects in somewhat different ways, to reconcile the information of the two modalities producing a single consistent representation.

In all the experiments, we used unnormalized Laplacians (\ref{eq:laplacian}) constructed with Gaussian weights. 
Optimization of~(\ref{eq:cost}) was performed using conjugate gradients with inexact Armijo linesearch \cite{bertsekas1999nonlinear} with $\alpha$ in the range $10^4 - 10^8$. 
The edges $\tilde{E}_k = E_k$ were selected 
to preserve the connectivity of the original graphs.  
The information about the datasets as well as approximate timing (complexity of cost function and gradient evaluation, measured on a MacBook Air) is summarized in Table~\ref{tab:timing}.

\begin{table*}[htdp]\small
\begin{center}
\begin{tabular}{r cccc}

{\bf Dataset} & $n$  & $\tilde{M}_1$  & $\tilde{M}_2$  & T (sec)\\
\hline
{\em Caltech} 	& 105 & 791 & 678 & 0.0116\\
{\em Ring} 	& 140 & 149 & 149 & 0.0059\\
{\em Circles} 	& 195 & 443 & 446 & 0.0125\\
{\em Swissroll} 	& 400 & 866 & 877 & 0.0766\\
{\em Man} 		& 500 & 915 & 922 & 0.1195\\
{\em Reuters} & 600 & 10122 & 10669 & 1.2203\\
\hline
\end{tabular}\vspace{-2mm}
\end{center}
\caption{\label{tab:timing} \small Number of degrees of freedom and computational time of cost function and its gradient on different datasets. 
}
\end{table*}

{\bf Circles:} we used two graphs, shaped as four eccentric circles containing 195 points and having different connectivity (Figure~\ref{fig:circles}, left). 
The closest commuting Laplacians were found using the procedure described above and result in graph weights shown in Figure~\ref{fig:circles} (right): the optimization performs a `surgery' disconnecting the inconsistent connections and producing four connected components. 

{\bf Ring:} We used a ring and a cracked ring sampled at 140 points and connected using 4 nearest neighbors (Figure~\ref{fig:ring_evec}) to visualize the effect of our optimization on the resulting Laplacian eigenvectors. Figure~\ref{fig:ring_evec} (top) shows the first few eigenvectors $\bb{\Phi}_1, \bb{\Phi}_2$  
of the original Laplacians $\bb{L}_1, \bb{L}_2$: their structure differs dramatically.  
The CCO optimization cuts the connections in the first dataset, making the two rings topologically equivalent. Since the new Laplacians $\tilde{\bb{L}}_1, \tilde{\bb{L}}_2$ commute, they are jointly diagonalizable and thus the new sets of eigenvectors 
are identical ($\tilde{\bb{\Phi}}_1 = \tilde{\bb{\Phi}}_2 = \tilde{\bb{\Phi}}$, as shown in Figure~\ref{fig:ring_evec}, bottom). 

Figure~\ref{fig:ring_hk} shows the heat kernels computed on the original graphs ($\bb{H}^t_1, \bb{H}^t_2$, left) and after the optimization ($\tilde{\bb{H}}^t_1, \tilde{\bb{H}}^t_2$, right). 
For comparison, we also show the `joint' heat kernels $\hat{\bb{H}}^t$ obtained using joint diagonalization of the original Laplacians computed with JADE \cite{Cardoso96jacobiangles}. 
The latter is not a valid heat operator as it contains negative, albeit small, values (Figure~\ref{fig:ring_hk}, center).

{\bf Human shapes:} We used two poses of the human shape from the TOSCA dataset \cite{bronstein2008ngn}, uniformly sampled at 500 points and connected using 5 nearest neighbors. The resulting graphs have different topology (the hands are connected or disconnected, compare Figure~\ref{fig:man} top and bottom). 
We computed the heat diffusion distance with time parameter $t=100$ according to~(\ref{eq:diffdist}), truncating the sum after $100$ terms.  
Computing $d_t$ on the original graphs (Figure~\ref{fig:man}, left) manifests the difference in the graph topology: the distance from the fingers of the left hand to those of the right hand differs dramatically, as in one graph one has to go through the upper part of the body, while in the other one can `shortcut' across the hands connections. 
Our optimization disconnects these links (Figure~\ref{fig:man}, right) making the distance in both cases behave similarly. 
For comparison, we show the result of  simultaneous diagonalization using JADE (Figure~\ref{fig:man}, center), where the distance $\hat{d}_t$ is computed using joint approximate eigenvectors and average approximate joint eigenvalues as defined in~(\ref{eq:jade_diffdist}).

{\bf Swiss rolls:} We used two Swiss roll surfaces with slightly different embeddings and geometric noise, sampled at 400 points and connected using 4 nearest neighbors. Because of the different embeddings, the two graphs have different topology (the first one cylinder-like and the second one plane-like, see Figure~\ref{fig:swiss} top left). 
As a result, the embedding of the two Swiss rolls into the plane using Laplacian eigenmaps differ dramatically (Figure~\ref{fig:swiss}, bottom left). 

Performing our CCO optimization removes the topological noise making both graphs embeddable into the plane without self-intersections (Figure~\ref{fig:swiss}, top right). The resulting eigenmaps have correct topology and are perfectly aligned (bottom, right). 
For comparison, we show the joint diagonalization result (bottom, center). 

Finally, in Figure~\ref{fig:swiss} (bottom, right) we show optimization results obtained using a sparse set of pointwise correspondences from which a smooth functional correspondence was computed according to~(\ref{eq:fcorr}) and used in the generalized commutator in~(\ref{eq:cost_fun}).

{\bf Caltech:} We used the dataset from \cite{eynard2012multimodal}, containing 105 images belonging to 7 image classes (15 images per class) taken from the Caltech-101 dataset. 
The images were represented using the bio-inspired and the PHOW features used as two different modalities. 
We constructed the unnormalized Laplacian in each of the modalities using self-tuning weights, and computed the diffusion distance using the scale $t=1.6$ between all the images. 

Figure~\ref{fig:caltech} (left) shows the obtained diffusion distances. The CCO approach allows a significantly better distinction between image classes, which is manifested in higher ROC curves (Figure~\ref{fig:caltech}, right). 


{\bf Multiview clustering:}
We reproduce the multi-view clustering experiment from \cite{eynard2012multimodal}, wherein we use the previously described {\em Caltech} dataset; a subset of the \emph{NUS} dataset \cite{nus-wide-civr09} containing images (represented by 64-dimensional color histograms) and their text annotations (represented by 1000-dimensional bags of words); 
 the UCI \emph{Digits} dataset \cite{alpaydin1998cascading,liu2013multi} represented using 76 Fourier coefficients and the 240 pixel averages in $2\times 3$ windows; and the 
\emph{Reuters} dataset \cite{amini2010learning,liu2013multi} with the English and French languages used as two different modalities.
The goal of the experiment is to use the data in two modalities to obtain a multi-modal clustering that performs better than each single modality.

We use spectral clustering technique, consisting of first embedding the data in a low-dimensional space of the first eigenvectors, and then applying the standard K-means. The embedding is generated by the eigenvectors of each of the Laplacians individually ({\em unimodal}), by the approximate joint eigenvectors obtained by JADE, and the eigenvectors of the modified Laplacians produced by our CCO procedure. 
As a reference, we show the performance of the state-of-the-art Multimodal non-negative matrix factorization (MultiNMF) method \cite{liu2013multi} for multi-view clustering.
Table~\ref{tab:clustering1} shows the clustering performance of these different methods in terms of accuracy as defined in \cite{BekkermanJ07} and normalized mutual information (NMI).

\begin{figure}
\center{
\begin{overpic}
  [width=1\linewidth]{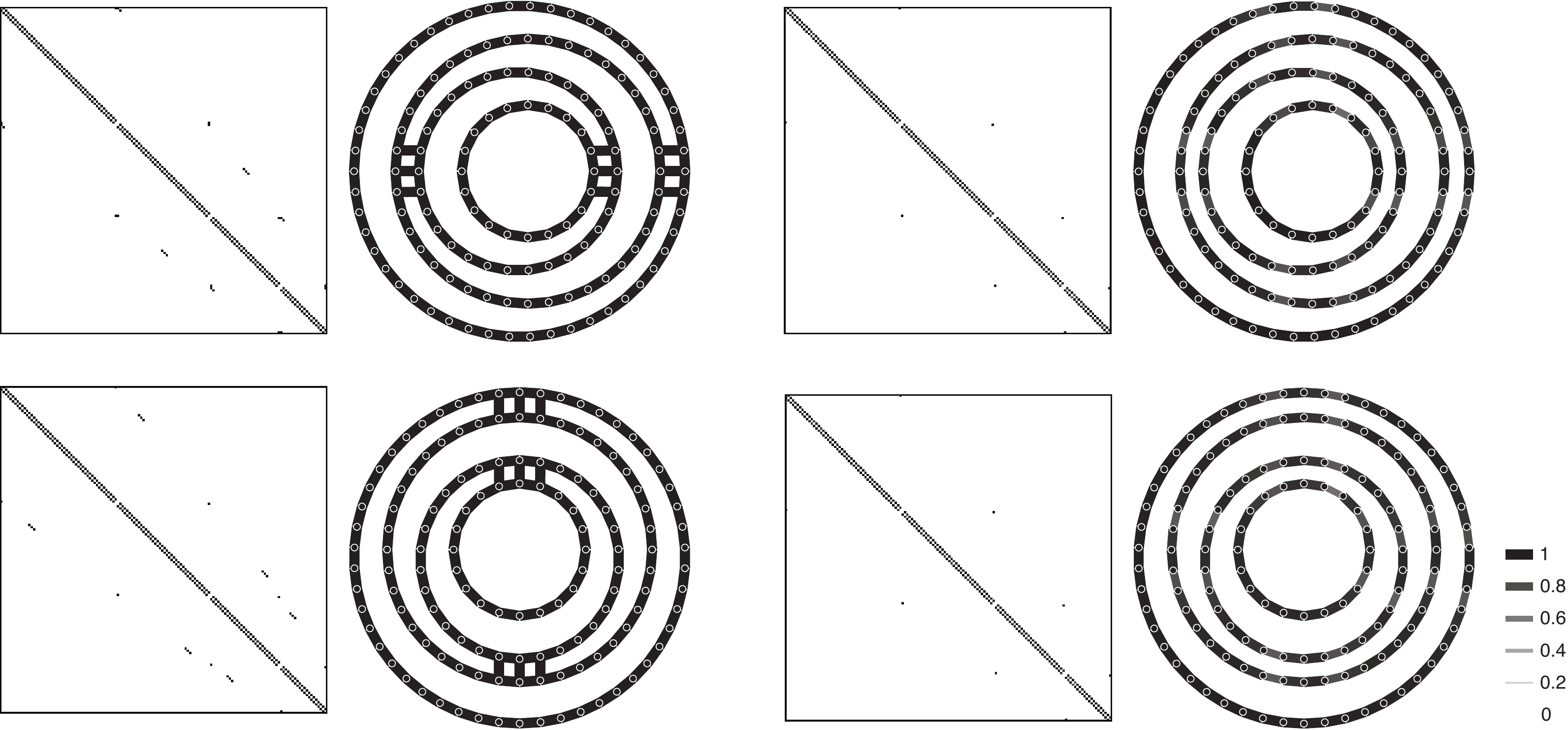}
    \put(20,-3){\footnotesize Original}
    \put(70,-3){\footnotesize COO}    
\end{overpic}\vspace{3mm}
  \caption{\label{fig:circles}  \small Graphs and adjacency matrices of the original data (left) and CCO (right).  
  Graph weights are shown with edge thickness and gray shades.  }
}
\end{figure}

\begin{figure}
\center{
\begin{overpic}
  [width=1\linewidth]{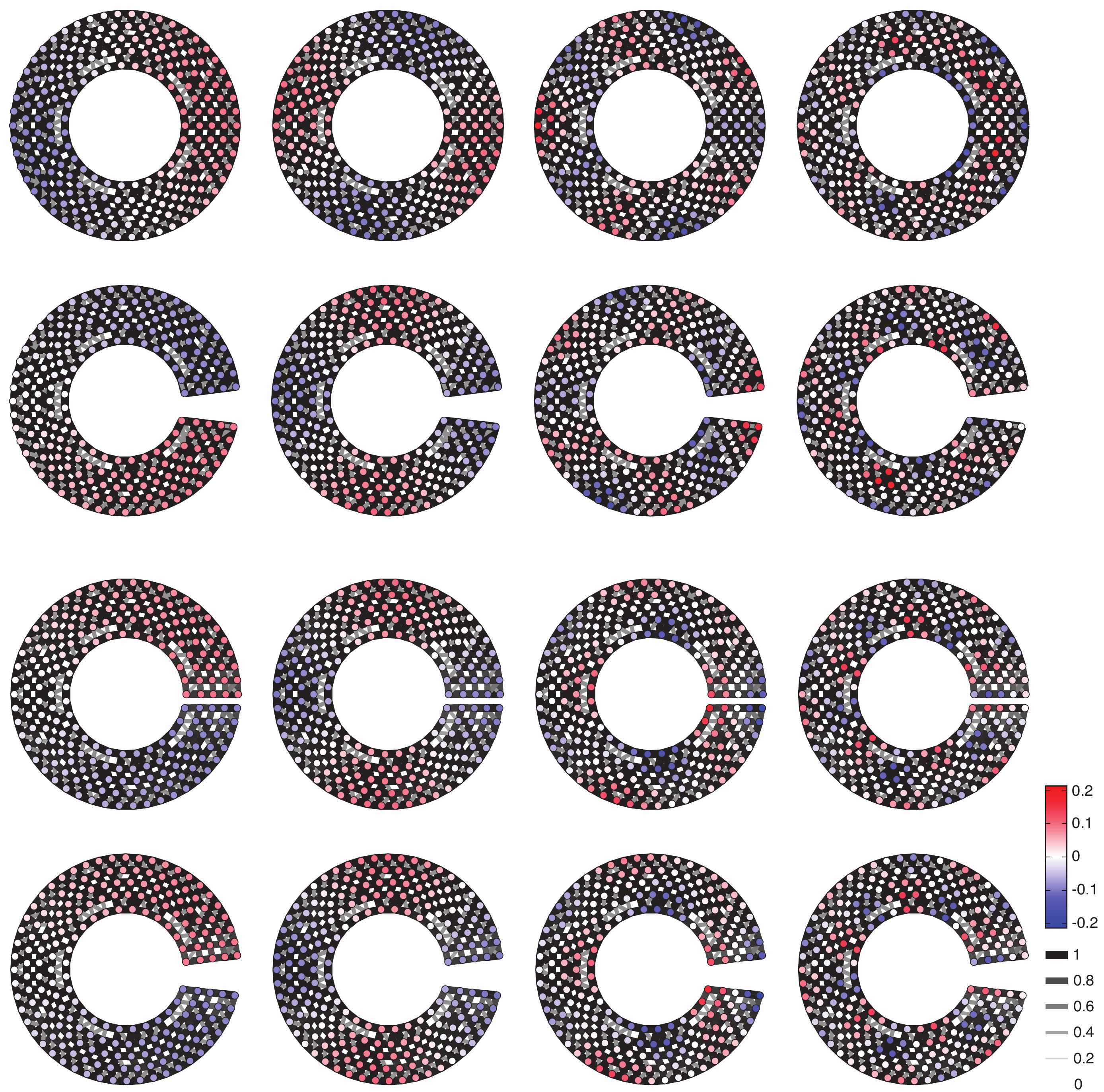}
    \put(10,74.5){\footnotesize $\bb{\phi}^1_2$}
    \put(34,74.5){\footnotesize $\bb{\phi}^1_5$}
    \put(58,74.5){\footnotesize $\bb{\phi}^1_{15}$}
    \put(82,74.5){\footnotesize $\bb{\phi}^1_{30}$}    
    \put(10,49.5){\footnotesize $\bb{\phi}^2_2$}
    \put(34,49.5){\footnotesize $\bb{\phi}^2_5$}
    \put(58,49.5){\footnotesize $\bb{\phi}^2_{15}$}
    \put(82,49.5){\footnotesize $\bb{\phi}^2_{30}$}        
    \put(10,22.5){\footnotesize $\tilde{\bb{\phi}}^1_2$}
    \put(34,22.5){\footnotesize $\tilde{\bb{\phi}}^1_5$}
    \put(58,22.5){\footnotesize $\tilde{\bb{\phi}}^1_{15}$}
    \put(82,22.5){\footnotesize $\tilde{\bb{\phi}}^1_{30}$}    
    \put(10,-2.5){\footnotesize $\tilde{\bb{\psi}}^2_2$}
    \put(34,-2.5){\footnotesize $\tilde{\bb{\psi}}^2_5$}
    \put(58,-2.5){\footnotesize $\tilde{\bb{\psi}}^2_{15}$}
    \put(82,-2.5){\footnotesize $\tilde{\bb{\psi}}^2_{30}$}           
\end{overpic}\vspace{3mm}
  \caption{\label{fig:ring_evec}  \small Eigenvectors of the original graph Laplacians ($\bb{\phi}^k_i$,  first and second rows) and the CCO ($\tilde{\bb{\phi}}^k_i$, third and fourth rows).  The eigenvectors of the CCO coincide, proving that they are jointly diagonalizable. 
  Graph weights are shown with edge thickness and gray shades. Eigenvector are shown with red-blue colormap.  }
}
\end{figure}

\begin{figure}
\center{
\begin{overpic}
  [width=0.9\linewidth]{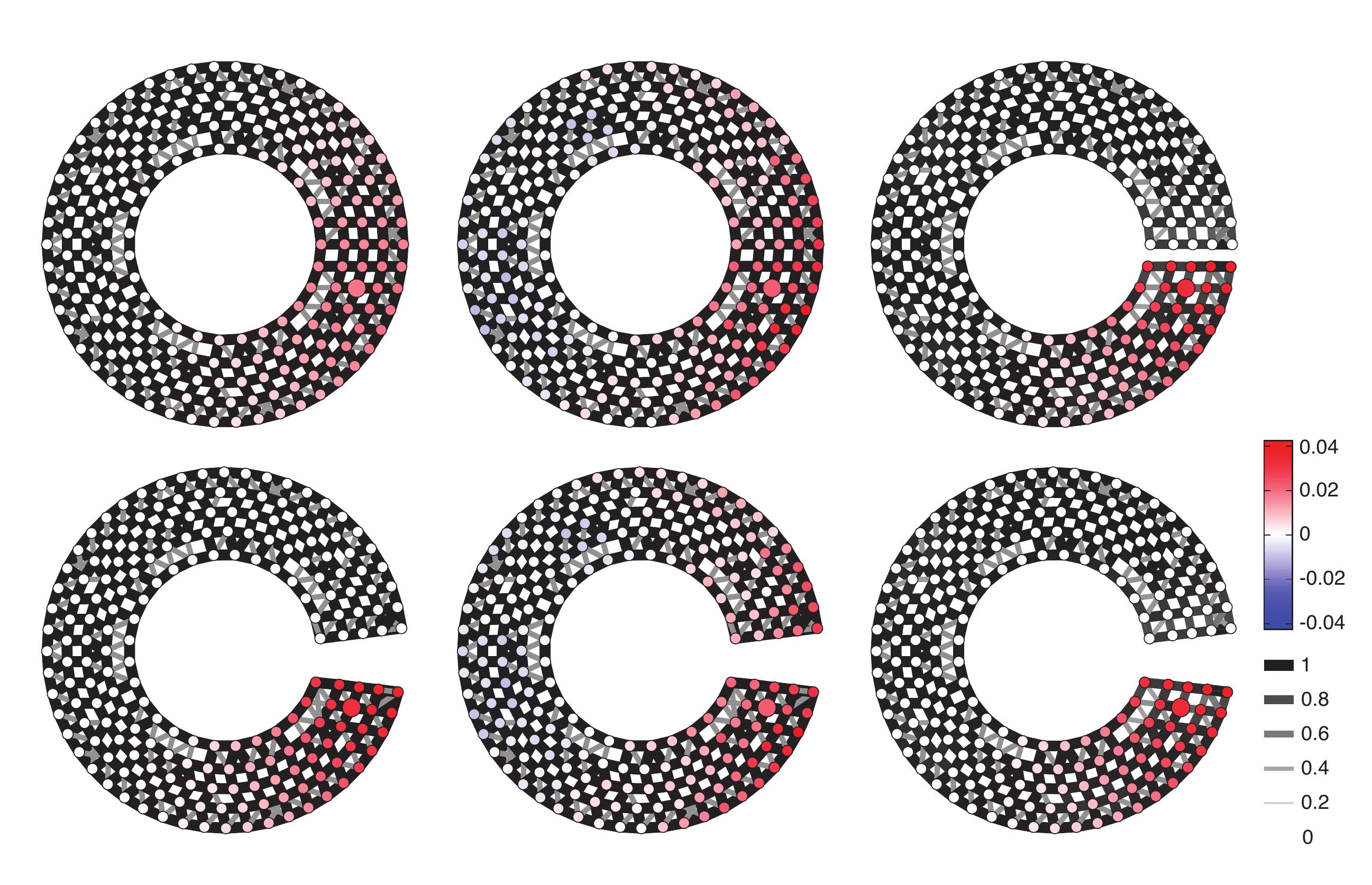}
    \put(12,0){\footnotesize Original}
    \put(43,0){\footnotesize JADE}
    \put(75,0){\footnotesize CCO}    
\end{overpic}\vspace{1mm}
  \caption{\label{fig:ring_hk}  \small Heat kernel at the point shown in big circle, computed using the original graph Laplacians ($\bb{H}_k^{20}$, left), joint diagonalization ($\hat{\bb{H}}_k^{20}$, middle), and CCO ($\tilde{\bb{H}}_k^{20}$, right).  
  Graph weights are shown with edge thickness and gray shades. Heat kernel values are shown with red-blue colormap. 
  JADE produces an invalid heat kernel, which has negative values. 
   }
}
\end{figure}

\begin{figure}
\center{
\begin{overpic}
  [width=1\linewidth]{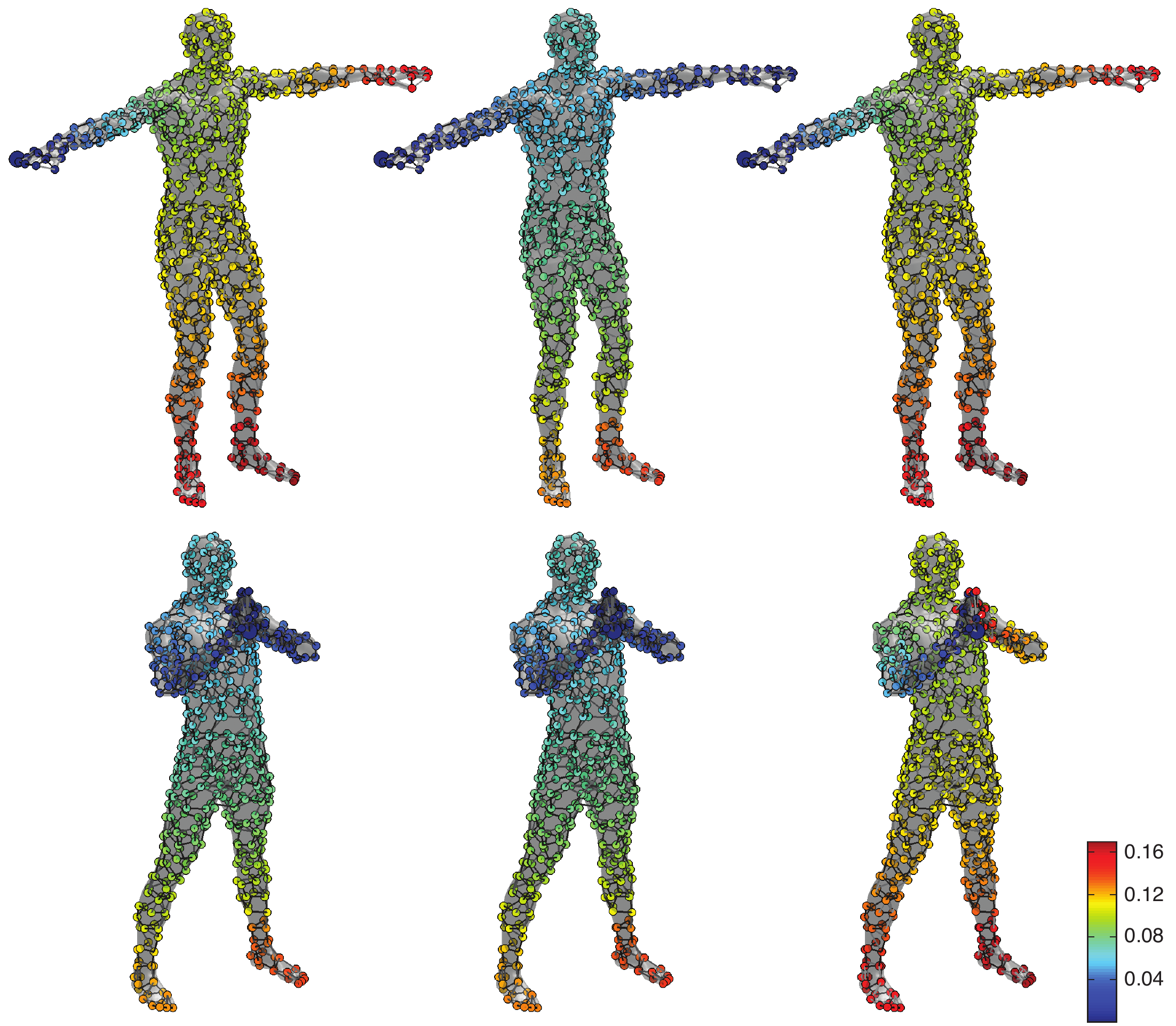}
    \put(15,-2){\footnotesize Original}
    \put(47,-2){\footnotesize JADE}
    \put(78,-2){\footnotesize CCO}    
\end{overpic}\vspace{3mm}
  \caption{\label{fig:man}  \small Diffusion distance from the point on the left hand (shown in big circle), computed using the original graph Laplacians (left) and CCO (right).  
}
}
\end{figure}

\begin{figure}
\center{
\begin{overpic}
  [width=1\linewidth]{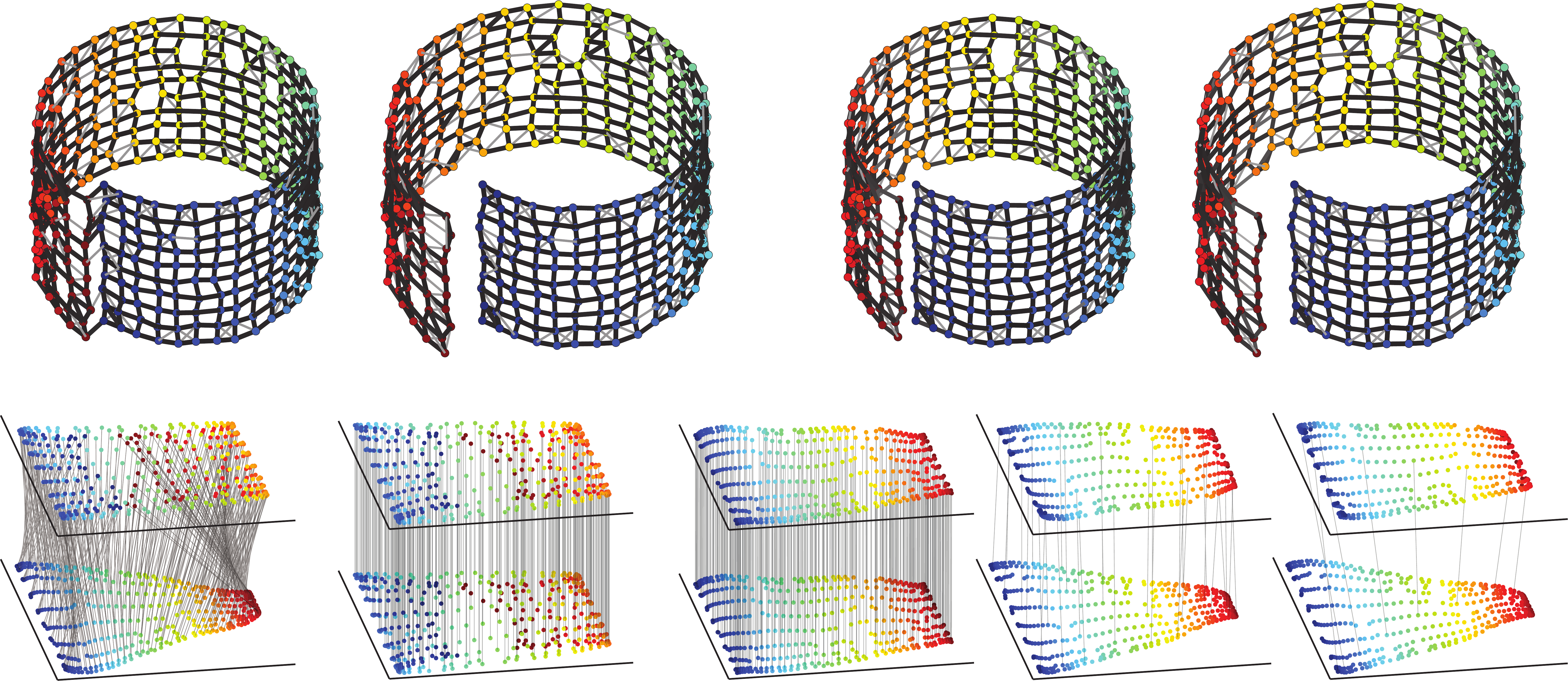}
    \put(17,19){\footnotesize Original}
    \put(73,19){\footnotesize CCO}    
    \put(6,-2.75){\footnotesize Original}        
    \put(28,-2.75){\footnotesize JADE}        
    \put(44,-2.75){\footnotesize CCO 100\% corr.}   
     \put(68,-2.75){\footnotesize 7.7\% corr.}      
     \put(87,-2.75){\footnotesize 2\% corr.}  
\end{overpic}\vspace{2mm}
  \caption{\label{fig:swiss}  \small First row: Swiss rolls with different connectivity before (left) and after (right)  optimization. 
Second row: Laplacian eigenmaps before (leftmost) and after optimization using different number of corresponding points (second to fourth column).  Color and lines show corresponding points.  }
}
\end{figure}

\begin{figure}
\center{
\begin{overpic}
  [width=1\linewidth]{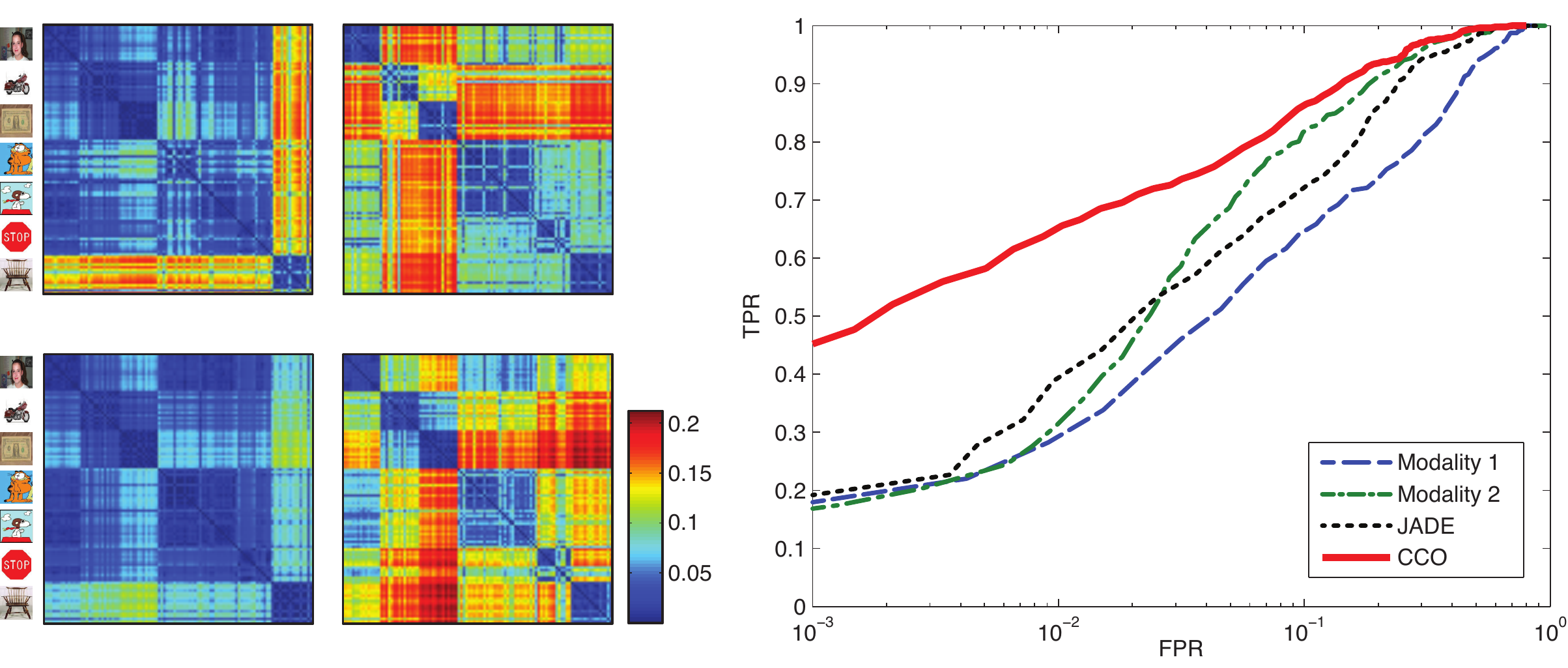}
    \put(5,21.5){\footnotesize Modality 1}
    \put(25,21.5){\footnotesize Modality 2}
    \put(7.5,0.5){\footnotesize JADE}
    \put(28,0.5){\footnotesize CCO}
\end{overpic}\vspace{0mm}
  \caption{\label{fig:caltech}  \small Left: diffusion distances computed on the Caltech dataset using independently the two modalities, and their combination with joint diagonalization and CCO method. 
  Right: ROC curves showing the tradeoff between false positive and true positive rates as function of a global threshold applied to the distance matrix (higher curves implies better discriminative power of the distance).
  }
}
\end{figure}

\begin{table*}[htdp]\small
\begin{center}
\begin{tabular}{rc | cccc}

& & \multicolumn{4}{c}{\bf Accuracy / NMI}\\
{\bf Dataset} & \hspace{-1mm}$n$\hspace{-1mm} &  Unimodal$^*$  &  MultiNMF 
& JADE 
& CCO \\
\hline
{\em Caltech} 
&	\hspace{-1mm}105\hspace{-1mm}	&	77.1 / 75.3	&	$**$			&	82.9 / 83.0	&	{\bf 90.5} / {\bf 93.4}	\\
{\em NUS} 
&	\hspace{-1mm}145\hspace{-1mm}	&	82.1 / 76.9	&	76.7 / 78.4	&	77.9 / 75.5	&	{\bf 86.9} / {\bf 84.4}	\\
{\em Reuters} 
&	\hspace{-1mm}600\hspace{-1mm}	&	{\bf 58.8} / 41.0	&	53.1 / 40.5	&	52.8 / 37.5	&	57.3 / {\bf 42.5}	\\
{\em Digits} 
&	\hspace{-1mm}2000\hspace{-1mm}	&	83.4 / 82.2	&	86.1	 / 78.1	&	84.5 / 84.0	&	{\bf 90.5} / {\bf 85.7}	\\
\hline
\end{tabular}\vspace{-2mm}
\end{center}
\caption{\label{tab:clustering1} \small Clustering performance (in $\%$) on four datasets. 
$^*$Best modality is shown.  $^{**}$Since Multi-NMF requires explicit coordinates of the data points, while {\em Caltech} data is represented implicitly as kernels, we could not measure its performance on this dataset.
}
\end{table*}

\section{Conclusions}

In this paper, we presented a novel approach for a principled construction of multimodal spectral geometry. 
Our approach is based on the observation that almost commuting matrices are close to commuting matrices, which, in turn, are jointly diagonalizable. 
We find closest commuting operators (CCOs) to a given pair of Laplacians, and use their eigendecomposition for multimodal spectral geometric constructions. 
We showed the application of our approach to several problems in pattern recognition and shape analysis.

We see several avenues to extend the work presented in our paper. 
First, our approach raised an open theoretical question, whether Huaxin Lin's theorem \cite{Huang_Lin} can be restricted to classes of special matrices, such as Laplacians. 

Second, we considered only unnormalized graph Laplacians. Our approach can be extended to other graph Laplacian, as well as discretizations of the Laplace-Beltrami operator on manifolds, such as the popular cotangent formula \cite{Pinkall93} for triangular meshes. 
More broadly, we can consider other Laplace-like operators \cite{hildebrandt2012modal}, heat, wave \cite{aubry2011wave} or general diffusion operators \cite{Coifman}. 

Third, while we used the $L_2$-norms in optimization problem~(\ref{eq:cost}), one can think of situations where the use of the 
sparsity-inducing $L_1$-norm can be advantageous. One such situation is dealing with point-wise topological noise, where one has to modify a few graph weights to perform `surgery' on the edges.

Fourth,  we considered only undirected graphs with symmetric Laplacians. 
An important task is to extend our method to directed graphs or combinations of directed and undirected graphs. 
From the theoretical standpoint, the latter should be possible, as indicated by the following result that builds on the work of Pearcy and Shields \cite{Pearcy1979332}  regarding the commutator of two matrices where one is self-adjoint. We can thus find CCOs, one of which is symmetric and one is not. 

\begin{theo}
If $\Aa$ and $\Bb$ are $n\times n$ real matrices and $\Aa$ is symmetric,
then there are commuting real matrices $\tilde{\Aa}$ and $\tilde{\Bb}$
with $\tilde{\Aa}$ symmetric so that 
\[
\| \Aa - \tilde{\Aa} \| _{\mathrm{F}} + \| \Bb-\tilde{\Bb}^{\prime}\|_{\mathrm{F}}\leq n\sqrt{2} 
\| \Aa\Bb - \Bb\Aa \|_{\mathrm{F}}^{\frac{1}{2}}.
\]
\end{theo}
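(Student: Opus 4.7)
The plan is to exploit the spectral theorem on the symmetric factor and reduce to a diagonal problem. Write $\Aa = \Uu \Llambda \Uu^\Tr$ with $\Uu$ real orthogonal and $\Llambda = \diag(\lambda_1,\dots,\lambda_n)$ sorted non-decreasingly, and set $\Cc = \Uu^\Tr \Bb \Uu$. Since the Frobenius norm is invariant under orthogonal conjugation,
$$
\|[\Aa,\Bb]\|_\mathrm{F}^2 \;=\; \|[\Llambda,\Cc]\|_\mathrm{F}^2 \;=\; \sum_{i,j}(\lambda_i-\lambda_j)^2 c_{ij}^2,
$$
which is the lever of the whole argument: a small commutator forces every entry $c_{ij}$ whose corresponding eigenvalue pair is well separated to be small.

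Next, following the classical Pearcy--Shields clustering strategy, I would introduce a threshold $t > 0$ and partition $\{1,\dots,n\}$ by cutting the sorted list of $\lambda_i$ at every gap exceeding $t$. Within a cluster, consecutive eigenvalues differ by at most $t$, so the cluster diameter is at most $(n-1)t$; between clusters, $|\lambda_i - \lambda_j| \ge t$ always. Form $\tilde{\Llambda}$ by collapsing every cluster to its midpoint and $\tilde{\Cc}$ by keeping only the block-diagonal part of $\Cc$ with respect to the cluster partition. The candidates $\tilde{\Aa} = \Uu \tilde{\Llambda} \Uu^\Tr$ and $\tilde{\Bb} = \Uu \tilde{\Cc} \Uu^\Tr$ are then real with $\tilde{\Aa}$ symmetric, and $[\tilde{\Aa},\tilde{\Bb}]=0$ because in the cluster-adapted basis $\tilde{\Llambda}$ acts as a scalar on each diagonal block while $\tilde{\Cc}$ is block-diagonal, so the two trivially commute block-by-block.

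The two Frobenius estimates are then nearly immediate. For $\Bb$, the cross-cluster gap combined with the commutator identity gives
$$
\|\Bb-\tilde{\Bb}\|_\mathrm{F}^2 \;\le\; t^{-2}\!\!\sum_{\text{cross}}(\lambda_i-\lambda_j)^2 c_{ij}^2 \;\le\; t^{-2}\|[\Aa,\Bb]\|_\mathrm{F}^2,
$$
while the cluster-diameter estimate yields $\|\Aa-\tilde{\Aa}\|_\mathrm{F}^2 \le n(n-1)^2 t^2/4$. Summing and optimizing the expression $\alpha t + \beta/t$ in $t$ by AM--GM produces a bound of order $n^{3/4}\,\|[\Aa,\Bb]\|_\mathrm{F}^{1/2}$, which is comfortably within the claimed $n\sqrt{2}\,\|[\Aa,\Bb]\|_\mathrm{F}^{1/2}$.

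The main obstacle I anticipate is not conceptual but one of bookkeeping. Reality and commutativity of $\tilde{\Aa}$ and $\tilde{\Bb}$ follow automatically from the block structure and the orthogonality of $\Uu$, and the constant $n\sqrt{2}$ is already rather loose compared to what a careful clustering delivers; no sharpening of the Pearcy--Shields idea is needed to reach the stated form. The delicate point is arranging the clustering so that in-cluster and between-cluster gap bounds are simultaneously respected, which is why the initial sorting of the eigenvalues and the use of a cut on the consecutive gap sequence are essential to the argument.
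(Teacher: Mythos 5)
Your construction is correct, and it takes a genuinely different route from the paper. The paper's proof is citation-based: it invokes the real version of Theorem 1 of Pearcy and Shields in the \emph{operator} norm, which gives $\max\{\|\Aa-\tilde{\Aa}\|_2,\|\Bb-\tilde{\Bb}\|_2\}\le\tfrac{\sqrt{n-1}}{\sqrt{2}}\|[\Aa,\Bb]\|_2^{1/2}$, and then converts norms via $\|\cdot\|_\mathrm{F}\le\sqrt{n}\,\|\cdot\|_2$ and $\|[\Aa,\Bb]\|_2\le\|[\Aa,\Bb]\|_\mathrm{F}$ to reach $\sqrt{2}\,n\,\|[\Aa,\Bb]\|_\mathrm{F}^{1/2}$; it also has to remark (without proof) that the Pearcy--Shields construction stays real when fed real inputs. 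You instead re-derive the underlying construction from scratch and work natively in the Frobenius norm: diagonalize $\Aa$ by a real orthogonal $\Uu$, use the identity $\|[\Aa,\Bb]\|_\mathrm{F}^2=\sum_{i,j}(\lambda_i-\lambda_j)^2c_{ij}^2$ with $\Cc=\Uu^\Tr\Bb\Uu$, cluster the sorted spectrum at gaps exceeding $t$, collapse clusters and keep the block-diagonal part of $\Cc$, and optimize over $t$. This buys three things: the argument is self-contained (no external reference), reality and symmetry of $\tilde{\Aa}$ are automatic from the real spectral theorem rather than needing a separate verification, and avoiding the lossy operator-to-Frobenius conversions yields the sharper constant $\sqrt{2}\,n^{1/4}\sqrt{n-1}\le\sqrt{2}\,n$, so the stated inequality follows with room to spare. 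Two trivial bookkeeping points to add for completeness: dispose of the case $[\Aa,\Bb]=0$ separately by taking $\tilde{\Aa}=\Aa$, $\tilde{\Bb}=\Bb$ (otherwise the optimal $t$ degenerates to $0$), and note that cross-cluster eigenvalue differences exceed $t$ strictly, so the bound $\|\Bb-\tilde{\Bb}\|_\mathrm{F}^2\le t^{-2}\|[\Aa,\Bb]\|_\mathrm{F}^2$ indeed holds as stated.
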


\begin{proof}
The reader may check that the construction by Pearcy and Shields \cite{Pearcy1979332} produces real matrices that commute when applied to real almost commuting matrices.

Now suppose $\Aa^\Tr=\Aa$. By the real version of Theorem 1 of \cite{Pearcy1979332} there exist $\tilde{\Aa}$ and $\tilde{\Bb}$ with $\tilde{\Aa}$ symmetric and 
\[
\max\left\{ \| \Aa-\tilde{\Aa}\| _{2}, \| \Bb-\tilde{\Bb}\|_{2}\right\}
\leq\frac{\sqrt{n-1}}{\sqrt{2}}\| \Aa\Bb - \Bb\Aa \|_{2}^{\frac{1}{2}}.
\]
Therefore, 
\begin{align*}
\| \Aa-\tilde{\Aa}\|_{\mathrm{F}}+\| \Bb-\tilde{\Bb}\|_{\mathrm{F}} & \leq\sqrt{n}\| \Aa-\tilde{\Aa}\|_{2}+\sqrt{n}\| \Bb-\tilde{\Bb}\|_{2}\\
 & \leq2\sqrt{n}\max\left\{ \| \Aa-\tilde{\Aa}\|_{2},\| \Bb-\tilde{\Bb}\|_{2}\right\}\\
 & \leq\sqrt{2}\sqrt{n^{2}-n}\, \| \Aa\Bb - \Bb\Aa \|_{2}^{\frac{1}{2}}\\
 & \leq\sqrt{2}n\, \| \Aa\Bb - \Bb\Aa \|_{\mathrm{F}}^{\frac{1}{2}}.
\end{align*}
\end{proof}

\section{Acknowledgement}

We are grateful to 
Davide Eynard for assistance with the clustering experiments.  
This research was supported by the ERC Starting Grant No. 307047 (COMET).



\bibliographystyle{plain}\small
\bibliography{laplacians.bib}

\end{document}